\setlist[itemize]{noitemsep} % Make itemize lists more compact
\renewcommand\thesection{\roman{section}} % Roman numerals for the sections
\renewcommand\thesubsection{\roman{subsection}} % roman numerals for subsections
\titleformat{\section}[block]{\large\scshape\centering}{\thesection.}{1em}{} % Change the look of the section titles
\titleformat{\subsection}[block]{\large}{\thesubsection.}{1em}{} % Change the look of the section titles
\renewcommand{\@algocf@capt@plain}{above}% formerly {bottom}
\newtheorem{theorem}{Theorem}
\newtheorem{proposition}[theorem]{Proposition}
\newtheorem{claim}[theorem]{Claim}
\theoremstyle{definition}
\DeclareMathOperator{\argmin}{argmin}
\title{Optimization-based Learning for Outbound Load Planning \\ in Parcel Delivery Service Networks} % Article title
\author{%
\text{Ritesh Ojha$^{a}$, Wenbo Chen$^{a}$, Hanyu Zhang$^a$, Reem Khir$^b$, Alan Erera$^a$, Pascal Van Hentenryck$^a$} %\thanks{A thank you or further information}
\\[1ex] % Your name
\small{$^{a}$H.\ Milton Stewart School of Industrial and Systems Engineering, Georgia Tech, Atlanta, GA}\\
\small{$^{b}$School of Industrial Engineering, Purdue University, West Lafayette, IN} \\
\small{\{rojha8, wenbo.chen, hzhang747\}@gatech.edu, \{rkhir\}@purdue.edu, \{alan.erera, pvh\}@isye.gatech.edu}
}
\begin{document}
\date{}
% Print the title
\maketitle
%\def\thefootnote{*}\footnotetext{corresponding authors}
% \footnote{Corresponding author}

\vspace{-1.2cm}

\begin{abstract}
The load planning problem is a critical challenge in service network
design for parcel carriers: it decides how many trailers (or loads),
perhaps of different types, to assign for dispatch over time between
pairs of terminals. Another key challenge is to determine a flow plan,
which specifies how parcel volumes are assigned to planned loads. This
paper considers the Outbound Load Planning Problem (OLPP) that
considers flow and load planning challenges jointly in order to adjust
loads and flows as the demand forecast changes over time before the
day of operations in a terminal. The paper aims at developing a
decision-support tool to inform planners making these decisions at
terminals across the network. The paper formulates the OLPP as a
mixed-integer programming (MIP) model and shows that it admits a large number of
symmetries in a network where each commodity can be routed through
primary and alternate terminals. As a result, an optimization solver
may return fundamentally different solutions to closely related
problems (i.e., OLPPs with slightly different inputs), confusing
planners and reducing trust in optimization. To remedy this
limitation, the first contribution of the paper is to propose a
lexicographical optimization approach that eliminates those symmetries
by generating optimal solutions staying close to a reference plan.
The second contribution of the paper is the design of an optimization
proxy that addresses the computational challenges of the optimization
model. The optimization proxy combines a machine-learning model and a
MIP-based repair procedure to find near-optimal solutions that satisfy
real-time constraints imposed by planners in the loop.  An extensive
computational study on industrial instances shows that the
optimization proxy is around $10$ times faster than the commercial
solver in obtaining solutions of similar quality; the optimization
proxy is also orders of magnitude faster for generating solutions that
are consistent with each other. The proposed approach also
demonstrates the benefits of the OLPP for load consolidation and the
significant savings obtained from combining machine learning and
optimization.
\end{abstract}

% \newpage
% \tableofcontents

\section{Introduction}

%The e-commerce market continues to show robust growth, and leading analysts project that today's \$3.3 trillion market could grow further to \$5.4 trillion annually by 2026 (\cite{WinNT}).

Much of e-commerce relies on the home delivery of small parcels and other boxed freight. Leading analysts project
that today's \$3.3 trillion e-commerce market could grow further to
\$5.4 trillion annually by 2026 (\cite{WinNT}). This has led key
freight carriers like UPS and FedEx to continuously seek to redesign and
operate profitable logistic networks that meet e-commerce customer
expectations. As a result, these companies need to solve a number of
strategic, tactical, and operational optimization problems. For
instance, at the strategic level, the {\em physical network design}
must determine optimal locations and sizes of various freight
processing terminals (also known as sorting facilities). At the tactical level,
the \textit{load planning} must determine the number of trailers and
their types (also known as {\em loads}) to dispatch over time between pairs of
terminals. The resulting load plan defines the transportation capacity
of the network. In addition, at the tactical level, the {\em flow
  planning} must determine how to allocate package volumes to loads or trailers
to serve the network demand effectively. When moving from its origin
to its destination, each package is transported by a sequence of
trailers, and it gets unloaded, sorted, and loaded at each terminal while moving
between these trailers. These sequences of trailers define the {\em primary
(flow) path} for each package. Together, the flow and load plans define
a service network that moves packages from origins to destinations to
meet customer service expectations.

Tactical flow and load plans are typically generated from
\textit{average} daily estimates of origin-destination package
volume. However, in practice, package volumes may substantially change
from day to day and week to week \parencite{lindsey2016improved}.
Thus, plans based on average flows may perform poorly during actual
operations and, in some cases, may even become infeasible.  To address
this issue, planners take advantage of {\em alternate paths} for
packages, i.e., they reroute packages from primary paths to other
paths that allow for better consolidation while maintaining service
guarantees.  Such adjustments are typically performed manually:
planners use the latest available estimates on package volumes at
their terminals along with their experience to modify existing flow
and load plans. {\em The goal of this paper is to propose a systematic
  and effective framework to plan adjustments at a terminal}. The
proposed framework automates current manual processes, significantly
reducing the time planners spend on reviewing and finalizing plans
while bridging the gap between {\em tactical flow and load planning}
and {\em operational execution} for improved
performance.\footnote{Optimizing load plan adjustments at the network
  level, across all terminals simultaneously, is computationally
  challenging, especially for the large-scale networks operated by our
  industry partner; those networks have thousands of terminals and
  move millions of packages daily.}

To the best of our knowledge, this paper proposes, for the first time,
an optimization-based learning tool that is capable of generating,
{\em in a few seconds}, near-optimal adjusted load plans for a given
terminal. The proposed approach can be leveraged as part of a more
complex framework to coordinate load plan adjustments across a cluster
of terminals or an entire network. More precisely, this paper defines
the {\em Outbound Load Planning Problem} (OLPP) whose goal is to
minimize the total trailer capacity required to accommodate the
estimated package volume leaving a terminal. This OLPP decides (1) the
number of trailers and their types to be scheduled for outbound
dispatch to other terminals at different time points throughout the
day of operations, and (2) the allocation of package volumes to
available routing options while respecting trailer capacity
constraints. These two decisions define what is called a {\em load
  plan} in this paper. The OLPP is strongly NP-hard and its Mixed
Integer Programming (MIP) formulation, when solved directly using a
commercial solver, exhibits significant sensitivity to input data,
which may result in considerable variations in load planning decisions
when commodity volume varies even slightly.  This makes the model, in
its standard form, impractical for industrial use because a small
change in package volume may result in fundamentally different load
plans, which is undesirable in a planner-in-the-loop environment. It is also harder to execute given drivers availability, equipment
positioning, and other practical considerations. To address these
challenges, the paper presents an optimization-based learning
framework that delivers high-quality, stable, and feasible solutions
to OLPP in a few seconds, even for the largest terminals in one of the
largest parcel delivery service networks operated in the United
States. The proposed methodology consists of two critical steps: (1)
the derivation of a new \textit{lexicographic optimization} model that delivers
{\em stable} optimal solutions; (2) an {\em optimization proxy} that
delivers near-optimal solution to the optimization model in a few
seconds.

% To address this challenge, the paper presents a MIP variant that uses a
% lexicographic optimization framework, henceforth referred to as LOLPP, to break the 
% symmetries of the standard OLPP MIP and produce {\em stable} optimal solutions. 
%   results in considerable variations in  load planning decisions under different commodity volume varies even slightly. 
% To remedy this drawback, the
% paper proposed the LOLPP, a lexicographic optimization model that breaks the
% symmetries of the OLPP. Although the LOLPP eliminates the instability of the OLPP and produces consistent plans that can be used by planners, it cannot be used in practice, as it does not meet the real-time constraints imposed by the planner interactions. 

% \pvh{This is old stuff that probably needs to be integrated here.}

 % bridging the gap between tactical planning and operational execution.

% \rk{The next paragraph should focus on the practical and computational challenges of OLPP, before stating the contributions.}

%day-differentiated proxies for each terminal 

The main {\em methodological and business contributions} of the paper
can be summarized as follows:
\begin{enumerate}

\item The paper proposes the Lexicographic Outbound Load Planning
  Problem (LOLPP) to remedy the limitations of the standard MIP
  formulation of OLPP. The LOLPP eliminates symmetries and provides
  stable optimal load plans that are as close as possible to a
  pre-determined reference plan.

\item The paper proposes an optimization proxy to address the
  computational difficulties of the LOLPP. The optimization proxy uses
  a {\em machine learning model} to predict the number of trailers to
  dispatch for each outbound terminal and {\em a repair procedure}
  that adjusts the predictions to satisfy the problem constraints and
  determine the package volume allocation to primary and alternate
  routing options. Once trained, the optimization proxy can be
  leveraged in an online setting to provide high-quality solutions in
  a few seconds.

\item The paper presents an extensive computational study on
  industrial instances, including some of the largest terminals in a
  large-scale parcel delivery network. The results show that the optimization proxy
  significantly outperforms a greedy heuristic and the MIP formulation
  both in terms of the objective function value and
  \textit{consistency} metrics. The optimization proxy is around $10$
  times faster than the commercial solver in obtaining solutions with
  the same objective function value and orders of magnitude faster in
  terms of generating solutions that are consistent with a reference
  plan. The experimental results also show the value of breaking
  symmetries in learning accurate solution patterns and producing
  high-quality and stable load plans. The efficiency of the proposed
  approach makes it practical for real-time load plan adjustments.
  
\item From a business and sustainability perspective, the experiments
  demonstrate the value of having alternate routing options for the commodities
  in addition to the primary routing options. The proposed load plans allocate
  approximately $17\%$ package volume to the alternate routing options and
  reduce the required trailer capacity by $12\%-15\%$.
\end{enumerate}

\noindent
The remainder of this paper is organized as follows. Section
\ref{section:related} summarizes related work. Section
\ref{Sec:OLPP-all} presents a high-level description of the OLPP and
its MIP formulation, and highlights the operational planning realities
in load planning. Section \ref{sec:LOLPP} presents a MIP formulation
with a lexicographic objective function to remedy the limitations of
the MIP formulation of OLPP. Section \ref{sec: solution method}
describes the optimization proxies. Section \ref{section:heuristics}
describes a heuristic that mimics human planners and is used to
benchmark the feasible solution obtained by the optimization
proxy. Section \ref{section:Study} describes the computational
results. Section \ref{insights-and-discussions} discusses the benefits
of optimization and machine learning, quantifying the cost and
sustainability benefits and the important factors driving
them. Finally, Section \ref{sec:conclusion and future} finishes with
conclusion and discusses future research directions.

\section{Related Work}
\label{section:related}

% Service Network Design in LTL industry
% SND: Tactical flow and load planning
% SND: Flexibility in load planning
% SND: Operational Load Planning

% Need to smoothly transition to ML proxies: Start talking about exact and heuristic solution approaches and then argue why you use ML-based heuristics.

% It us worth noting that OLPP is conceptually related to the variable-sized bin-packing problem
% described by \cite{friesen1986variable} where the objective is to
% minimize the total space used to pack a set of items into bins
% (available in different sizes), such that each item is packed into
% exactly one bin. In the OLPP, the packages are the items and trailers are bins, but the key difference is that the OLPP allows for the splitting of the package volume across multiple trailers (determined by the primary and alternate arcs) in order to
% further, reduce the transportation cost by promoting better consolidation or packing.
Transportation operations in parcel service networks are similar to operations in the \textit{Less-than-truckload} (LTL) industry \parencite{hewitt2019enhanced}. In the LTL trucking industry, decisions have to be made at strategic, tactical, and operational levels \parencite{crainic1997planning}. Much of the related literature focuses on  service network design (SND) problems,
% strategic and tactical planning, which requires solving\parencite{wieberneit2008service, jarrah2009large,boland2017continuous}. 
where different problems were classified as SND problems including flow planning problems, load planning problems, routing and dispatching problems, driver and fleet management problems, and vehicle routing and scheduling problems (\cite{bakir2021motor}).
%(see for example \parencite{bakir2021motor, powell1983load, powell1986local, crainic1997planning, crainic2000service, erera2013creating, lindsey2016improved, hewitt2019enhanced}).
We focus our attention on the first two problems given their direct relevance to the OLPP presented in this work.

%There is abundant research on SND in the context of LTL trucking (see for example
% \paragraph{Tactical Flow and Load Planning.}
% Interested readers
% can consult \cite{erera2013creating,lindsey2016improved, hewitt2019enhanced,baubaid2021value,baubaid2023dynamic} for a detailed description 
% of LTL operations. %This paper focuses on the outbound load planning operations at individual terminals in a small package service network. 
\cite{bakir2021motor} describe the
tactical \textit{flow and load planning problem} and present a detailed
summary of the mathematical models and heuristics used to solve the problem as found in the literature. The
goal of a flow and load planning problem is to determine
a cost-effective primary path for each package and the total
trailer capacity required on each transportation arc in the service network. 
% Figure \ref{fig: tactical flow and load plan}
% illustrates a tactical flow and load plan for multiple terminals.
% Figure \ref{fig: tactical_flow_plan}
% illustrates the flow plan that determines the origin-to-destination primary path for three types of packages (indicated by their destination terminal) and Figure \ref{fig: tactical_load_plan}
% illustrates the load plan that determines the number of trailers to operate on each transportation arc; the packages are loaded on these
% trailers. 
Most of these problems are formulated over time-space
networks using integer programming models. Exact approaches to solve these problems have been
proposed by \cite{boland2017continuous}, \cite{marshall2021interval},
and \cite{hewitt2019enhanced}. However, these approaches can only
solve instances with a few thousand of packages. For industry-scale
instances, researchers have resorted to various heuristics, including
variants of local search heuristic algorithms
\parencite{lindsey2016improved,erera2013improved,powell1986local}, and greedy algorithms \parencite{ulch2022greedy}.

Traditional flow and load planning problems generate a single, primary (flow) path for each package. \cite{baubaid2021value} study the value of
having alternate paths to hedge against demand
uncertainty. More specifically, the authors study the decisions that \textit{LTL} carriers need to make, in an operational setting, to effectively operate a $p-$alt load plan when demand changes dynamically on a day-to-day basis. The authors show that it is sufficient to have just one alternate option to contain the impact of most of the fluctuations in demand; such a load plan is referred to as a $2-$alt load plan. In a follow up work, \cite{baubaid2023dynamic} study a dynamic freight routing problem and propose an offline-online approximate dynamic programming solution approach for the problem. Given a specific number of shipments or pallets at a terminal in a decision epoch, the key decision in their problem is to decide the (unsplittable) routing of each shipment to a primary {\em or} an alternate arc. Along similar lines, \cite{herszterg2022near} introduce the problem of re-routing freight volume on alternate paths to improve the on-time performance of load plans on the day of operations; this becomes necessary when the actual volume deviates from the forecasted volume. In this work, package volume is unsplittable and is assigned to exactly one path such that the total (fixed) trailer capacity on each arc is respected while minimizing total shipment tardiness.
%, and the objective is to minimize the total lateness of shipments. 
The authors formulate the problem as a MIP model and propose heuristic algorithms to solve it. It is important to note that the problem described in both \cite{baubaid2023dynamic} and \cite{herszterg2022near} considers a network of terminals and fixed trailer capacity between any pair of terminals in the network. In our problem setting, we consider outbound operations from a single terminal and have the flexibility to change the outbound trailer capacity to one or more destinations from the terminal if it leads to better consolidation while allowing volume splitting across multiple routes.
Specifically, this research bridges the gap between tactical flow and load planning \parencite{bakir2021motor} and operational execution \parencite{herszterg2022near, baubaid2023dynamic}. The goal here is to efficiently and effectively adjust the existing load plan as more accurate package volume forecast is available; this problem is mentioned as an interesting and useful future research direction by \cite{lindsey2016improved}. The flexibility to adjust the load plans enables terminal planners to better manage daily operations while maintaining service guarantees.

From a methodological perspective, we propose an optimization-based learning framework that can generate near-optimal and implementable solutions even for the largest terminal in a few seconds. In recent years, there has been a notable surge of interest among
researchers in the development of machine learning (ML) surrogates for solving MIP models. This
emerging field has attracted attention due to the potential of ML
techniques to provide efficient approximations for computationally
intensive calculations involved in solving MIPs.  We refer the reader
to (\cite{bengio2021machine, kotary2021end}) for a comprehensive
overview on the topic.  The techniques can fall into one of the two
categories.  The first category includes methods based on
reinforcement learning
(\cite{khalil2017learning,kool2018attention,fu2021generalize,yuan2022reinforcement,song2022flexible}),
where the ML model is trained by interacting with simulation
environments.  The second category comprises supervised learning
(\cite{AAAI2020,kotary2021learning,kotary2022fast,park2022confidence,chen2023two}),
where the ML model imitates the optimization model and replaces
expensive calculations with a quick approximation.  This research
focuses on the latter category since the proposed optimization model
could be used as the expert for supervised learning; we refer to this optimization-based learning methodology as the optimization
proxies. Optimization proxies, which combine learning with feasibility restoration, have
emerged from supervised learning and have been used in various application areas including energy systems and scheduling operations
(\cite{park2022confidence,kotary2022fast,chen2023two,E2ELR}). This paper adds to this stream of research and shows the potential of optimization proxies in solving logistics and transportation problems.

% Solving this optimization problem for a terminal and ensuring that the solutions are not very sensitive to the changes in total commodity volume requires a lot of computational effort. Therefore, this research proposes a machine learning based solution approach to help terminal planners adjust load plans till the day-of-operations. The solution approach produces load plans, in which the trailer decisions do not deviate too much from the trailer decisions in the base load plan, in real time, i.e., in a few seconds. 
 %Interestingly, the reference plan will become implicit in the proposed approach thanks to the optimization proxy. The OLPP thus acts as a bridge between tactical flow and load planning \parencite{ulch2022greedy} and operational load plan execution \parencite{herszterg2022near}.

%The research by \cite{herszterg2022near} is similar to the OLPP in the sense that both work aim at adjusting existing load plans to improve operational efficiency.  However, there are also key differences. \cite{herszterg2022near} focus on \textit{non-splittable} volume allocation decisions on the day-of-operations given a \textit{fixed} load plan; each package is allocated to exactly one flow path: primary or alternate.  In this paper, the goal is to determine \textit{splittable} cubic volume allocation across primary and alternate paths

%\section{Parcel Delivery Service Network Operations}
\section{The Outbound Load Planning Problem (OLPP)} \label{Sec:OLPP-all}

This section presents a high-level problem description of the standard
OLPP problem. It then introduces a MIP formulation and discusses planning
realities in the field and the related practical challenges.

\subsection{High-Level Problem Description}
\label{sec:high-level-problem-description}

This research is conducted with a leading global parcel carrier that
operates a massive network of more than one thousand terminals and
processes millions of packages daily. As the size and volume of the
packages can be very small compared to the capacity of a trailer,
parcel carriers seek to consolidate packages from different customers
using terminals in their parcel service network (PSN).  In a PSN,
packages move from one terminal to another until they reach their
final destination.  At a specific terminal, a package has a set of
outbound terminals that it can be routed to while meeting its service
expectations; the set of such outbound terminals is determined a
\textit{priori}, in a strategic planning phase at the network level, to ensure that each
package reaches its destination in timely manner. To facilitate
consolidation, parcel carriers group packages arriving at a terminal
on a particular day into a set of commodities, where a commodity is a
grouping of packages sharing the same service class (e.g., one-day
service or two-day service), and the same destination. Each commodity
has a specific volume and that volume can be split into different
quantities that can be sent along to different outbound terminals.

% Moreover, on a particular day, a terminal receives a set of {\em commodities}, where
% a commodity is a grouping of packages with the same service class
% (e.g., one-day service or two-day service), and the same
% destination. Each commodity has a specific volume and that volume can
% be split in different quantities that can be sent along to different
% outbound terminals.

This paper considers the {\em Outbound Load Planning Problem} (OLPP)
for a single terminal in a PSN. The OLPP receives a set of commodities
as input and its goal is to design a load plan for the day by
determining how many trailers are sent to each outbound terminal, and
how the volume of each commodity is split across these outbound
terminals.  The OLPP objective is to minimize trailer costs (a proxy
for minimizing operational cost), while ensuring that sufficient
trailer capacity is installed to accommodate the volume of all
commodities. Note that existing load planning literature typically
assumes that \textit{all} commodities arriving at a terminal should
be assigned to a single outbound terminal, that is, splitting is not allowed. The OLPP considered in this
paper is different: it determines how to split each commodity volume
across its outbound terminals to minimize trailer costs.

In practice, the problem is slightly more involved since each day at a
terminal is divided into time windows (typically three to four hours
in length), called \textit{sort periods} or \textit{sorts}, during
which packages are sorted. A typical operational day includes ``day'',
``twilight'', ``night'' and ``sunrise'' sorts that are non-overlapping
in time. The OLPP must be executed for multiple sorts at a terminal
and the concept of outbound terminal is replaced by a pair (terminal,
sort). The generalization of the OLPP to this more complex setting is
discussed in Appendix \ref{load_pair_idea_arcs}.

\subsection{Problem Formulation}
\label{sec:OLPP}

% Section \ref{sec: sort pairs and load pairs} describes some key terminology and presents examples to illustrate the operations at terminals. Section \ref{sec: splitting} describes the OLPP that includes splitting of commodity volume across primary and alternate paths. Section \ref{ProblemModeling} presents a mixed-integer programming (MIP) formulation for the OLPP.
%This section presents a MIP formulation for OLPP for a given terminal, say $\mathcal{O}$, on a given day with one or more sorts.

This section presents the OLPP model. Let $A$ denote the set of
outbound terminals from a given terminal $\mathcal{O}$. Let $K$ denote
the set of commodities to be sorted at terminal $\mathcal{O}$. Each
commodity $k \in K$ has a cubic volume of $q^k$ and a set of outbound
terminals $A^k \subseteq A$ that defines the next
\textit{service-feasible} terminals for the commodity; if a commodity
is routed to a service-feasible outbound terminal, then it reaches its
destination on time. The total commodity volume allocated to outbound
terminal $a \in A$ can be loaded into a set of trailer types $V_a$
where each trailer type $v \in V_a$ has a cubic capacity $Q_v$, and a
per-unit transportation cost $c_{v}$. Each outbound terminal can have
a different set of allowed trailer types, i.e., $V_{a_1}$ can be
different from $V_{a_2}$ for two different outbound terminals $a_1,a_2
\in A$. A solution of the OLPP determines the number of trailers of
each type assigned to each outbound terminal and the volume of each
commodity allocated to a trailer type for these outbound terminals. A
solution must ensure that all the volume is assigned to trailers and
that the trailer capacities are not exceeded. The objective of the
OLPP is to find a solution that minimizes total trailers' cost.

A Mixed-Integer Programming (MIP) model for the OLPP can be stated as
follows: 
\begin{subequations}\label{CreateLoadsModel}
    \begin{align}
    \underset{x,y}{\text{Minimize}} \ \quad &   \sum_{a \in A} \sum_{v \in V_a} c_{v} y_{a,v} \label{obj}\\
    \text{subject to} 
         \ \quad & \sum_{a \in A^k} \sum_{v \in V_a} x^k_{a,v} = q^k, & \quad \forall  k \in K,  \label{FlowAssignment}\\
         \ \quad & \sum_{k \in K:a \in A^k}  x^k_{a,v}\leq Q_v \, y_{a,v}, & \quad \forall  a \in A, v \in V_a,  \label{LoadCapacity}\\
         \ \quad & x^k_{a,v} \geq 0 & \quad \forall k \in K, a \in A^k, v \in V_a,  \label{Nonneg1} \\
         \ \quad & y_{a,v} \in \mathbb{Z}_{\geq 0} & \quad \forall a \in A, v \in V.  \label{Nonneg2}
    \end{align}
\end{subequations}

\noindent
It uses a non-negative continuous decision variable $x^k_{a,v}$ to
represent the volume of commodity $k \in K$ allocated to trailer type
$v \in V_a$ planned for outbound terminal $a \in A^k$, and an integer
decision variable $y_{a,v}$ to determine the number of trailers of
type $v$ planned for outbound terminal $a \in A$. The experiments use $c_v =
Q_v \ \forall \ v \in V$ in the objective \eqref{obj}, i.e., the model
minimizes the total trailer capacity required to accommodate the commodity
volume. Constraints \eqref{FlowAssignment} ensure that the volume of
each commodity is assigned to outbound terminals. Constraints
\eqref{LoadCapacity} ensure that the total volume allocated to an outbound terminal
respects the installed trailer capacity associated with the
terminal. Constraints \eqref{Nonneg1}-\eqref{Nonneg2} define the
domain and range of variables.

It is worth noting that OLPP is conceptually related to the
variable-sized bin-packing problem described by
\cite{friesen1986variable}, where the objective is to minimize the
total space used to pack a set of items into bins (available in
different sizes), such that each item is packed into exactly one
bin. In the OLPP, the packages are the items and trailers are bins;
The key difference is that the OLPP allows for the {\em splitting} of
the package volume across multiple trailers (determined by the primary
and alternate terminals); this makes it possible to reduce the
transportation cost by promoting better consolidation or packing.

Appendix \ref{appendix:complexity} includes complexity results and
related proofs for the OLPP problem. The OLPP, in its general form, is
strongly NP-hard. It becomes weakly NP-hard when each commodity is
compatible with exactly one or all outbound terminals, and there are
multiple trailer types. It becomes polynomial when each commodity is
compatible with exactly one or with all outbound terminals, and there
is only one trailer type.

\subsection{Planning Realities in the Field}
\label{sec:planning-realities-in-the-field}

The OLPP model presented in the previous section is idealized. This
section reviews planning realities in the field, which require
generalizations of the OLPP.

\paragraph{Primary and Alternate Routing Options}

Parcel carriers typically identify a set of outbound terminals for
each commodity processed at the current terminal; the preferred option
is the {\em primary} outbound terminal and the other options are
referred to as {\em alternate} outbound terminals. The choice of the
alternate terminals is usually determined after identifying the
primary terminals, typically using historical data and past
experiences \parencite{baubaid2021value}. It is desirable for a
practical OLPP model to send commodities to their primary outbound
terminal as much as possible, while using alternate terminals as
opportunities for better consolidations. Figure
\ref{fig:rules_terminals_paths} illustrates the concepts of primary
and alternate terminals. Recall that the primary and alternate
outbound terminals for each commodity are defined in such a way that
the commodities allocated to these routing options reach their
destination on time.

\begin{figure}[ht!]
    \centering
    \includegraphics[width=8cm, height=5cm]{./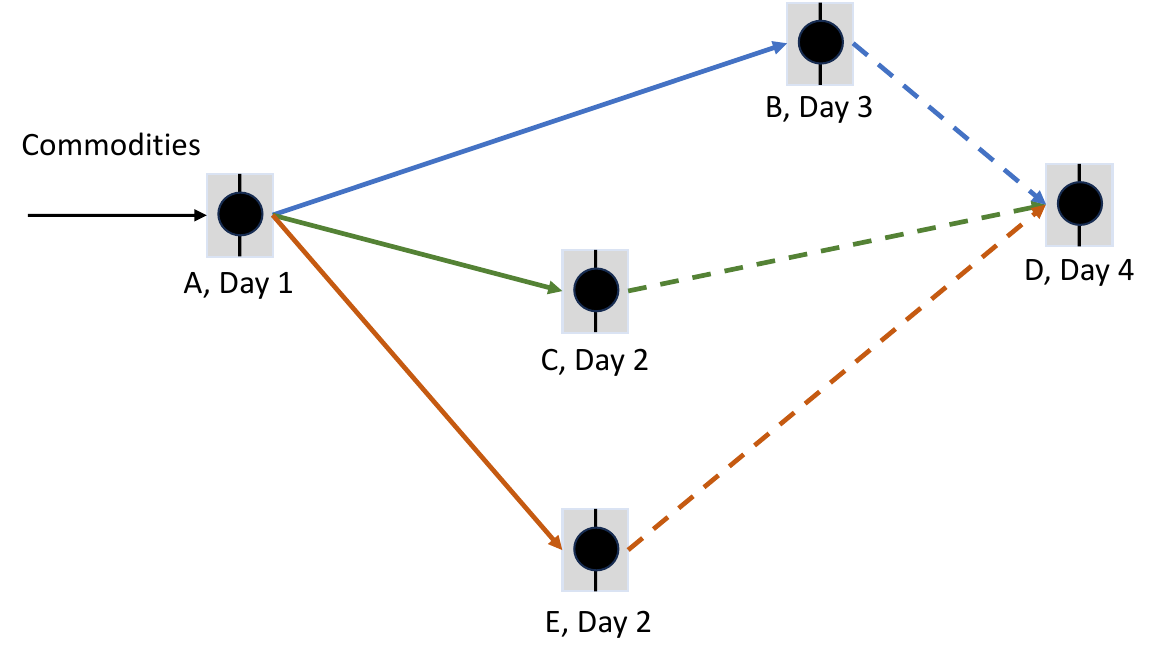}
    \caption{The \textit{primary} terminal for commodities processed at terminal A and
      destined for terminal D on day $4$ is denoted by solid green
      arc and the two alternate terminals are denoted by solid blue
      and solid brown arcs.}
    \label{fig:rules_terminals_paths}
\end{figure}
%The goal of this research is to determine the
%outbound load plan from a terminal such that the planned trailer
%capacity for each outbound terminal is sufficient to contain the
%commodity volume processed at the terminal and assigned to their
%primary and/or alternate outbound terminals.

% Parcel carriers typically identify a preferred outbound terminal.
% This terminal is the {\em primary} outbound terminal and the others
% are called the {\em alternate} outbound terminals.  The choice of the
% alternate terminals is usually determined after identifying the
% primary terminals, typically using historical data and past
% experiences \parencite{baubaid2021value}. It is desirable for a
% practical OLPP model to send commodities to their primary outbound
% terminal as much as possible, using alternate terminals as
% opportunities for better consolidations. Figure
% \ref{fig:rules_terminals_paths} illustrates the concepts of primary
% and alternate terminals.

\paragraph{Planners in the Loop}

This research is conducted with a leading global parcel carrier, with
planners in the loop in most terminals.  The OLPP is solved
periodically over a period going from the two weeks prior to the day
of operations to the day of operations. For each time period, each
terminal in the network has a set of forecasted inbound commodities.
The role of the planners is to adjust an existing flow and load plan
to accommodate the changes in forecasted inbound commodities.  When
automating the planning process with optimization and/or machine
learning, it is thus critical to present planners with plans that are
consistent over time as much as possible.

Unfortunately, the OLPP model exhibits a large number of variable
symmetries, i.e., the values of several variables can be permuted to
obtain another optimal solution. Consider the example shown in Figure
\ref{sym123} that illustrates a \textit{primary terminal} $(C,Day 2)$,
an alternate terminal $(B,Day 3)$, and an alternate terminal $(E,Day
2)$ for a set of commodities $K$ processed at terminal A on day $1$
and destined for terminal D on day $4$. Figure \ref{sym123} depicts a
simple instance with symmetric solutions that are operationally
different from one another, yet they are equivalent, from the OLPP
Model (\ref{CreateLoadsModel}) perspective, as they require the
same number of trailers of the same type, and hence the same total
trailer capacity. The values of the $y$ variables for each outbound
terminal in Figure \ref{sym1_new} can be permuted to get the solution
in Figures \ref{sym2_new} and \ref{sym3_new} without changing the
problem structure. The alternate optimal solutions exist in the OLPP
Model (\ref{CreateLoadsModel}) because commodities are
indifferent to the primary and alternate terminals they are assigned
to, as the volume allocation decisions ($x$ variables) do not incur
any cost.

\begin{figure}[t!]
    \begin{subfigure}[t]{0.33\textwidth}
        \includegraphics[width=\textwidth]{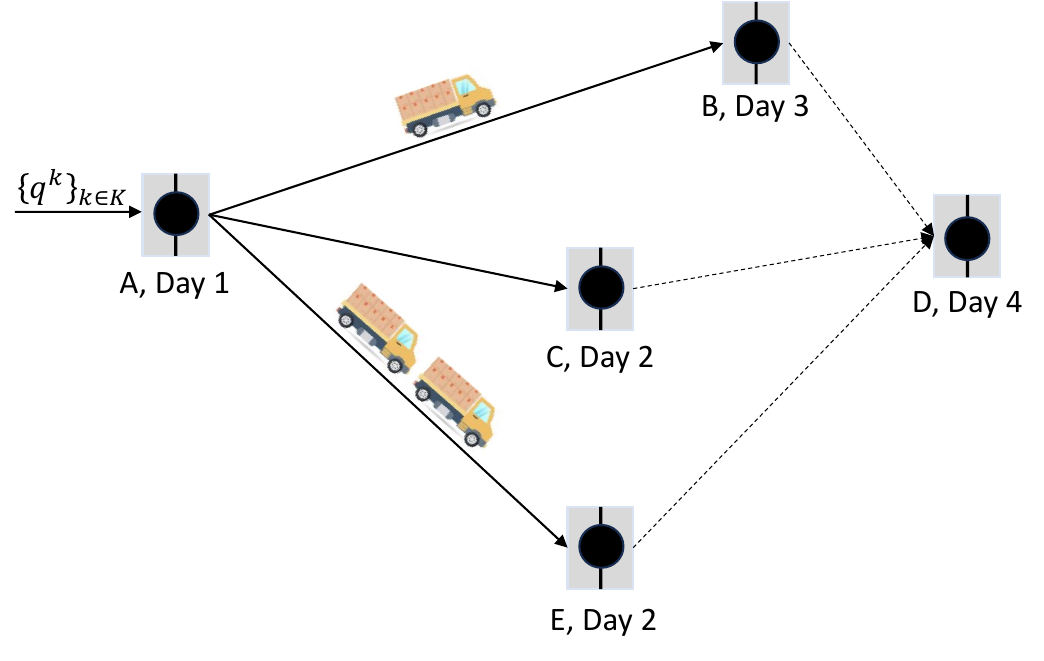}
        \caption{} \label{sym1_new}
     \end{subfigure}%
     \hfill
    % \hspace*{0.2\textwidth}   % maximize separation between the subfigures
    \begin{subfigure}[t]{0.33\textwidth}
        \includegraphics[width=\textwidth]{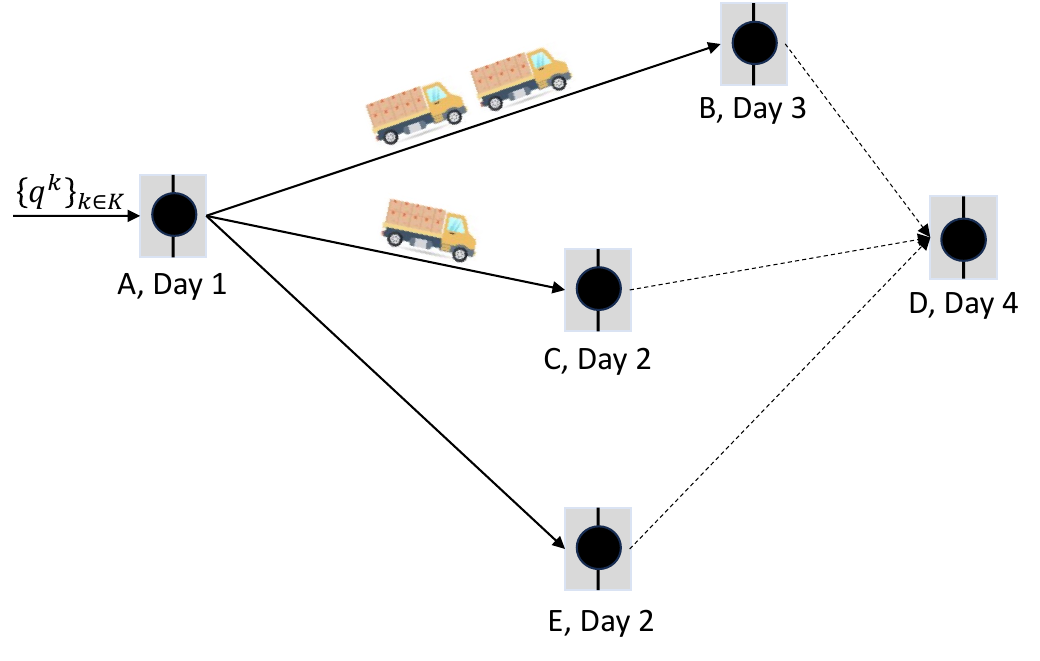}
        \caption{} \label{sym2_new}
     \end{subfigure}%
     \hfill
    \begin{subfigure}[t]{0.33\textwidth}
        \includegraphics[width=\textwidth]{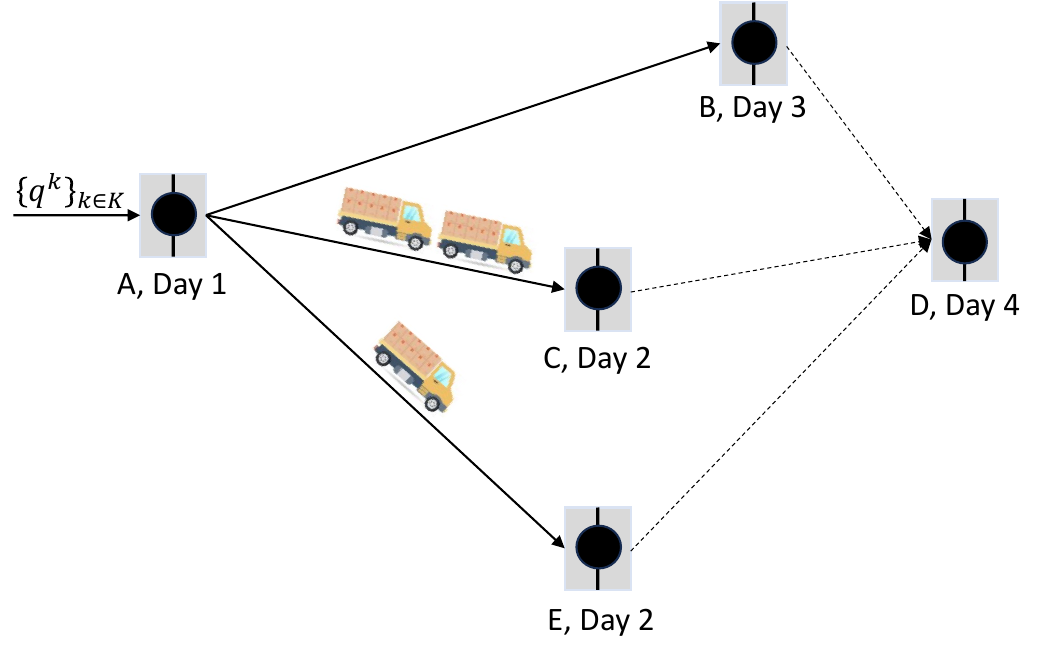}
        \caption{} \label{sym3_new}
     \end{subfigure}%
     \hfill
     \caption{An example to highlight the alternate (symmetric) optimal solutions for OLPP (Model \ref{CreateLoadsModel}).}
\label{sym123}
\end{figure}

Such symmetries are undesirable for many reasons. Paramount among them
are the realities in the field: the model is intended to be used and
validated by planners. Furthermore, the OLPP is sensitive to minor
changes in input data (commodity volume). If small or no variations of
total commodity volume produce fundamentally different solutions,
planners are unlikely to trust the model.  Indeed, since the model is
used multiple times during the day, it is important to ensure that
successive optimal solutions are as consistent with each
other as possible. For the example shown in Figure \ref{sym1_new}, consider two
instances with commodity volume $\phi_1$ at time $t_1$ and commodity
volume $\phi_2 = \phi_1 + \epsilon$ at time $t_2$. If both instances
have optimal solution with three trailers, it is desirable that they
would allocate trailers similarly. However, the OLPP model, when
solved by a commercial solver, does not guarantee this, highlighting
the sensitivity of the model to input data.

\section{The Lexicographic Outbound Load Planning Problem (LOLPP)}
\label{sec:LOLPP}

This section presents the Lexicographic Outbound Load Planning Problem
(LOLPP) designed to account for the realities in the field. The LOLPP
uses a lexicographic objective function to produce stable plans
and break symmetries, while optimizing for total trailer capacity.
The LOLPP is represented by Model \ref{Stage2Model}:
\begin{subequations}\label{Stage2Model}
    \begin{align}
    \underset{x,y}{\text{Lex-Minimize}} \ \quad &  \langle \sum_{a \in A} \sum_{v \in V_a} c_{v} y_{a,v} ,\sum_{a \in A} \sum_{v \in V_a}\vert y_{a,v} - \gamma_{a,v} \vert,\sum_{k \in K}\sum_{a \in A_k}\sum_{v \in V_a}d^k_a x^k_{a,v} \rangle \label{obj_s2}\\
    \text{subject to} 
         \ \quad & \sum_{a \in A^k} \sum_{v \in V_a} x^k_{a,v} = q^k, & \quad \forall  k \in K,  \label{FlowAssignment_s2}\\
         \ \quad & \sum_{k \in K:a \in A^k}  x^k_{a,v}\leq Q_v (y_{a,v}), & \quad \forall  a \in A, v \in V_a,  \label{LoadCapacity_s2}\\
         %\ \quad & y_{v,s} \leq \beta_{v,s}, & \quad \forall  s \in S, v \in V,  \label{UpperBound}\\
%         \ \quad & \sum_{a \in A} \sum_{v \in V_a} c_{v} y_{a,v} \leq Z^*, \label{stage1_obj}\\
         \ \quad & x^k_{a,v} \geq 0 & \quad \forall k \in K, a \in A^k, v \in V_a,  \label{Nonneg1_s2} \\
         \ \quad & y_{a,v} \in \mathbb{Z}_{\geq 0} & \quad \forall a \in A, v \in V.  \label{Nonneg2_s2}
    \end{align}
\end{subequations}
Model \ref{Stage2Model} receives two key inputs: (1) a
\textit{reference plan} $\gamma$, i.e., the last available tactical
outbound load plan for the terminal under consideration that specifies
the number of trailers of different types planned to operate for each
outbound terminal; and (2) the costs of allocating commodity volume to
its primary and/or alternate terminals. Its objective function
leverages these two inputs to account for the realities in the
field. More precisely, the objective function \eqref{obj_s2} is
lexicographic and consists of three sub-objectives that are optimized
in sequence.

The first sub-objective is the objective of the OLPP: it minimizes the
total trailer cost. The second sub-objective minimizes the deviation
of the solution from the reference plan $\gamma$. The third
sub-objective minimizes the {\em flow diversion cost}.  In the second
sub-objective, $\gamma_{a,v}$ denotes the total number of trailers of
type $v \in V_a$ assigned to terminal $a \in A$ in the reference plan;
the goal is to minimize the L1-norm between the solution $y$ and
$\gamma$. In the third sub-objective, $d^k_a$ denotes the cost of
allocating a unit of commodity $k \in K$ to a terminal $a \in A^k$;
recall that $A^k$ denotes the set of primary and alternate outbound
terminals for commodity $k \in K$. The allocation cost to the primary
outbound terminal is zero, as it represents the desired routing option. The
cost of allocating a commodity volume to an alternate terminal is
proportional to the distance between the alternate terminal and the
final commodity destination, since it is preferable to route
commodities as close as possible to their ultimate destination. The
purpose of this diversion cost is to break symmetries between
solutions with the same distance to the reference plan. For instance,
it breaks the symmetries in Figure \ref{sym123}, as the cost $d^k_a$
is different for different outbound terminals $a$ for a given
commodity $k$. Note also that constraints \eqref{FlowAssignment_s2},
\eqref{LoadCapacity_s2}, \eqref{Nonneg1_s2} and \eqref{Nonneg2_s2} are
the same as in the OLPP Model (\ref{CreateLoadsModel}).

The LOLPP uses the second and the third sub-objectives to account for
some of the network effects, so that the generated load plans are not
completely myopic in nature. Note that the reference plan $\gamma$ for
a given terminal is derived from a tactical load plan developed at the
network level. Therefore, the second sub-objective ensures that the
optimized load plan for the terminal does not deviate significantly
from the network-level load plan. Furthermore, if it becomes necessary
to allocate the volume of a commodity to an alternate outbound
terminal, then the flow diversion cost (third sub-objective)
prioritizes the alternate terminals that are closer to the final
destination as opposed to choosing the nearest alternate outbound
terminal, which can be a myopic decision.

Figure \ref{fig:high_sensitivity} shows the benefits of the LOLPP: it
illustrates the sensitivity of the trailer decisions, i.e., the values
of $y$-variables for some individual outbound terminals when the
total commodity volume $\sum_{k \in K}q^k$ varies. The vertical axis
denotes the number of trailers of a specific type used for a specific
outbound terminal, and the horizontal axis denotes the total commodity
volume. The red plot represents decisions from solving the OLPP Model
(\ref{CreateLoadsModel}), the orange plot represents decisions of the
LOLPP Model (\ref{Stage2Model}), and the blue plot represents
decisions of the LOLPP without the second sub-objective
LOLPP$^{-2}$. 

\begin{figure}[!t]
    \begin{subfigure}[t]{0.5\textwidth}
        \includegraphics[width=\textwidth]{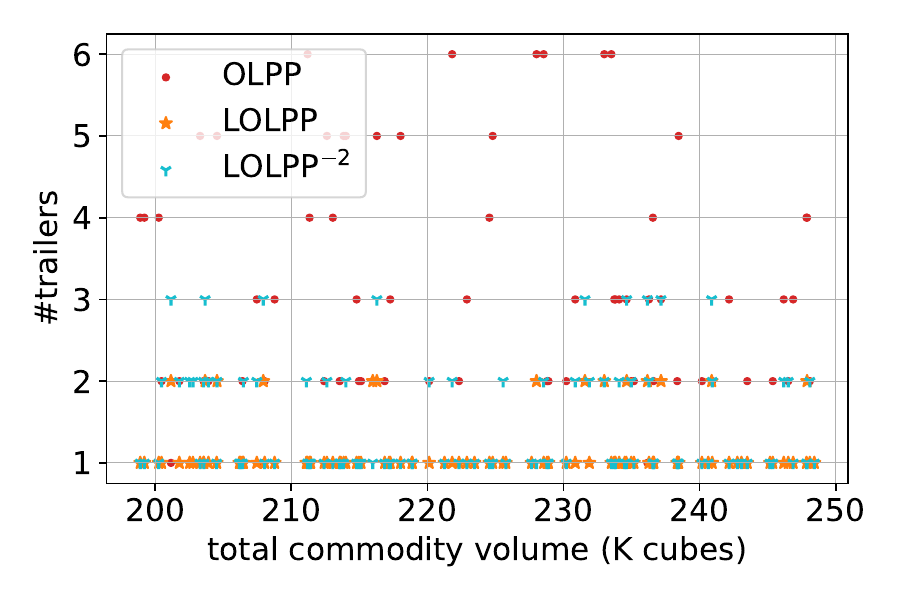}
        \caption{Outbound Terminal 1}
     \end{subfigure}%
     \hfill
    \begin{subfigure}[t]{0.5\textwidth}
        \includegraphics[width=\textwidth]{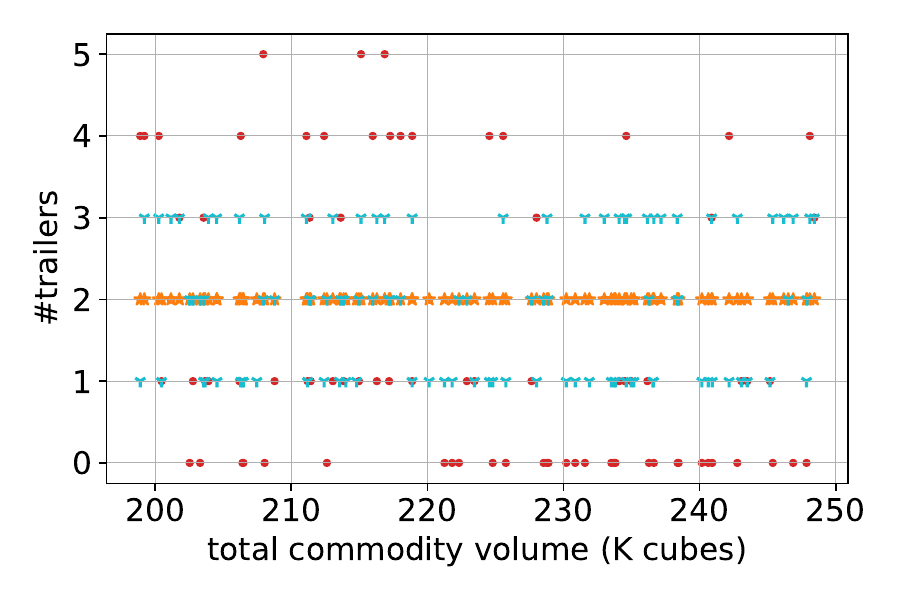}
        \caption{Outbound Terminal 2}
     \end{subfigure}%
     \hfill
    \caption{(color online) Sensitivity analysis of the OLPP, LOLPP, and LOLPP$^{-2}$ on a real-world medium-sized instance from our industry partner. } \label{fig:high_sensitivity} 
\end{figure}

%\color{blue}{The labels have to be renamed here. It may be better to show the points only, not the interpolation. It clusters the figures. The interpolation may be better shown later in the paper.}

In the illustrated examples, as the total commodity volume increases, the OLPP
model produces solutions where the trailer decisions fluctuate
dramatically between $1$ and $6$ trailers for outbound terminal $1$ and
between $1$ and $5$ trailers for outbound terminal $2$. The trailer
decisions of the LOLPP model are more consistent, and vary between $1$
and $2$ trailers for outbound terminal $1$, and is constant at $2$
trailers for outbound terminal $2$. When the second sub-objective is
dropped, the trailer decisions vary between $1$ and $3$ for both
outbound terminals, showing the effectiveness of the symmetry-breaking
approach. It is worth noting that, in the OLPP, the \textit{total} trailer
capacity is a non-decreasing function of the total commodity
volume. However, the total trailer capacity planned for \textit{an
  outbound terminal} may increase or decrease when the total commodity
volume increases, as seen in Figure \ref{fig:high_sensitivity}.

\section{Learning the LOLPP}
\label{sec: solution method}

The LOLPP model is typically too slow to be of practical use in
real-time environments with planners in the loop. This section proposes an optimization-based ML approach to address these computational limitations.

\subsection{The Overall Approach}

The LOLPP can be viewed as a function $f$ that, given a vector of
commodities $\mathbf{q}$, returns an optimal solution
$(\mathbf{y}^*,\mathbf{x}^*)$ specifying the load plan and the
allocation of commodity volume to the primary and alternate terminals,
respectively. A corresponding machine learning model is a function
$\hat{f}$ that approximates $f$, i.e., $f(\mathbf{q}) \approx
\hat{f}(\mathbf{q})$.  More precisely, the machine learning model
$\hat{f}$, given the same vector of commodities $\mathbf{q}$, should
return an approximate solution $(\hat{\mathbf{y}},\hat{\mathbf{x}})$
such that $\hat{y}_{a,v} \approx y^*_{a,v}$ and $\hat{x}^k_{a,v}
\approx x^{k*}_{a,v}$. However, such approximation does not guarantee
the feasibility of generated solutions; the difficulty comes from the
constraints of the LOLPP, i.e., constraints
\eqref{FlowAssignment_s2}--\eqref{LoadCapacity_s2}, which are highly
likely to be violated by the machine learning model. In other words,
it is unlikely that $ (\hat{\mathbf{y}},\hat{\mathbf{x}})$ be a
feasible solution to the LOLPP.  This paper explores the concept of
{\em optimization proxy} to remedy this issue and produce
high-quality, feasible solutions for the LOLPP.

An optimization proxy is composed of a traditional machine learning
model that produces an approximate solution, say
$(\hat{\mathbf{y}},\hat{\mathbf{x}})$ in the LOLPP, and a repair layer
that transforms the approximation into a feasible solution, say
$(\bar{\mathbf{y}},\bar{\mathbf{x}})$, of the LOLPP. See, for
instance,
\cite{SIAMNEWS,AAAI2020,AAAI2023,IEEETPS2022spatial,PSCC2022,kotary2021learning,kotary2022fast,park2022confidence,kotary2022fast,chen2023two,E2ELR}
for an overview of optimization proxies and their applications.

\subsection{Optimization Proxies}

Figure \ref{fig:ml:pipeline} depicts the pipeline of the
optimization-based learning approach used in this paper. At
learning time (offline), the approach generates a dataset of instances, i.e.,
mappings between inputs and outputs of the LOLPP. These instances are
used to train a machine learning model. At inference time (online)
when planners use the model, the machine learning model is used to
obtain an approximated solution to an unseen instance (i.e., a set of
commodities for the LOLPP) which is then transformed into a feasible
solution by the repair layer, i.e., a feasible load plan and a
commodity volume allocation plan for the LOLPP.

\begin{figure}[!t]
% {0.95\textwidth}
    % \begin{subfigure}[t]{0.95\textwidth}
        \includegraphics[width=\textwidth]{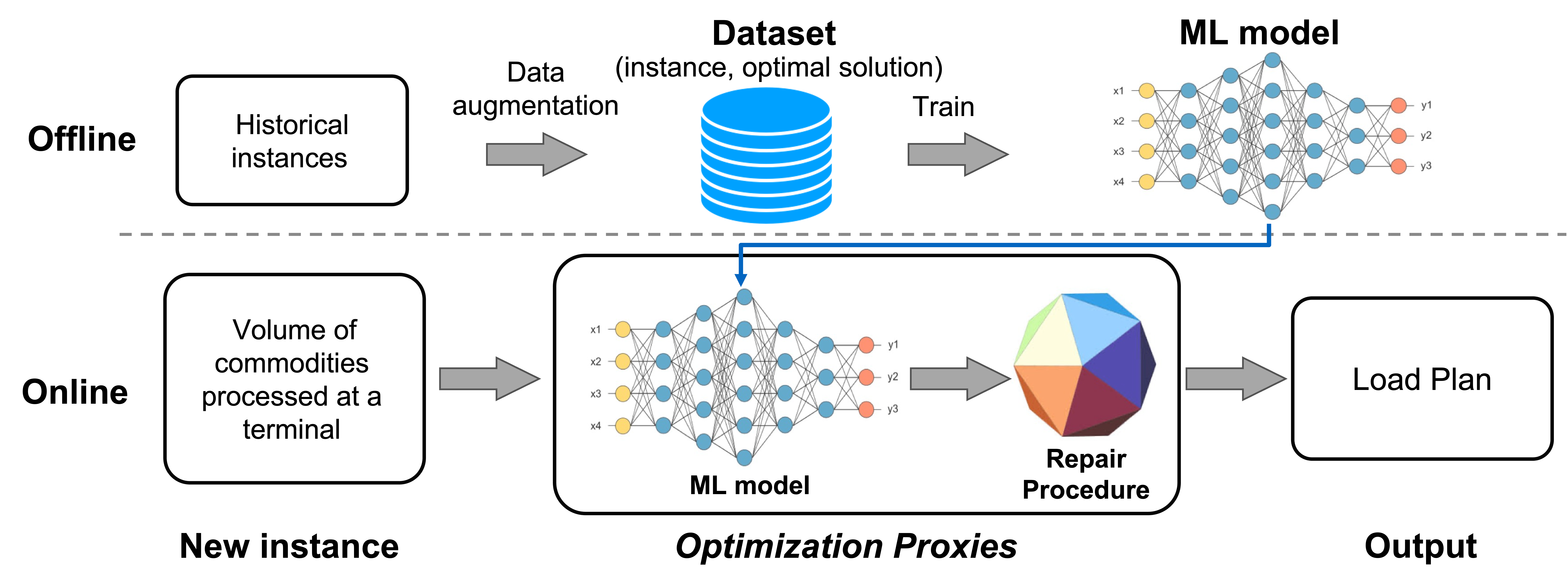}
     % \end{subfigure}%
        \caption{The Overall Pipeline of the Optimization-based Learning
          Approach: In the offline setting, the dataset is generated
          and augmented by solving historical instances with the
          proposed LOLPP model. The ML model is trained in a
          supervised learning fashion. In the online setting, the
          optimization proxies take as input a new instance and output
          a near-optimal feasible load plan and commodity volume allocation plan in seconds.}
        \label{fig:ml:pipeline}
\end{figure}

A simple way to obtain a feasible solution to LOLPP is to predict the commodity volume allocation ($\hat{x}$) variables and determine a corresponding feasible trailer decision ($\Bar{y}$) variables as shown below
\[
\bar{y}_{a,v}=\left \lceil \frac{\sum_{k \in K:a \in A^k} \hat{x}^k_{a,v}}{Q_v} \right \rceil \ \forall a \in A, v \in V_a.
\]
%Note that this approach ignores the predictions $\hat{y}_{a,v}$.
However, the feasible solution obtained from the
approach can be rather poor in terms of the total trailer capacity
because the trailer decisions can be very sensitive to the predicted commodity
volume allocation decisions. Consider, for example, an optimal solution where 100
cubic volume is allocated to an outbound terminal, which requires two
trailers, each with a capacity of 50 units.  If the machine learning model predicts
a volume of $100.5$, then the total number of trailers required
will be $\left \lceil \frac{100.5}{50} \right \rceil = 3$, which
generates a poor solution of the LOLPP.

To remedy this problem, this paper focuses on a machine learning model
that predicts the optimal trailer decisions. The repair step then
adjusts these trailer decisions to obtain a feasible solution to the
LOLPP, including the allocation decisions of commodity volume to their
respective primary and/or alternate outbound terminals.

%It is important to highlight that an ML model could be used to predict the allocation of commodity volume on compatible arcs, i.e., the value of the $\mathbf{x}$ variables, instead
%of the trailer decisions, i.e., the $ \mathbf{y}$ variables in HOM. This may seem to be an efficient approach since, after predicting volume allocation, one can easily
%recover the trailer decisions and, hence, a feasible solution by
%setting 
%Experimental results confirmed that it is beneficial to learn the
%mapping from input parameters to the trailer decisions rather than the
%volume allocation decisions; the trailer decisions $\mathbf{y} \in
%\mathbb{Z}_{\geq 0}^{|A| \times |V|}$ are more aggregated than the
%volume allocation decisions $\mathbb{R}_{\geq 0}^{|K| \times |A|
%  \times |V|}$. The benefit comes from the significant reductions in
%output dimensionality and variability. Furthermore, as described in
%section \ref{sec:mip-feasibility-restoration}, if the predicted trailer
%decisions are infeasible, then restoring the feasibility of the solution is
%relatively easy, as the repair procedure, by construction, uses an efficient MIP that has a small number of binary decision variables.

\subsection{The Machine Learning Model}
\label{sec:machine-learning-model}

This section presents the machine learning model that, given the commodity
volumes, predicts the trailer decisions for every outbound
terminal. Consider a machine learning model $\hat{f}_{\theta}$,
parameterized by $\theta$, that maps its inputs, i.e., the commodity
volume, to the optimal trailer decisions: \eqref{map1}-\eqref{map2}.
\begin{subequations}
\begin{align}
        \hat{f}_{\theta}:&\mathbb{R}_{\geq 0}^{|K|} \longrightarrow \mathbb{Z}_{\geq 0}^{|A| \times |V|} \label{map1}\\
        & \mathbf{q} \longmapsto \mathbf{y}\label{map2}
\end{align}
\end{subequations}

\noindent
The machine learning model $\hat{f}_{\theta}$ is trained using a
dataset of instances $\{(\mathbf{q}_i, \mathbf{y}^*_i)\}_{i \in N}$,
where $N$ is the set of instances, $\mathbf{q}_i$ is the input of
instance $i$, and $\mathbf{y}^*_i$ is the optimal trailer decision of
instance $i$. The best parametrization $\theta^*$ can be obtained by
minimizing the empirical risk
\begin{subequations}
\label{eq:ml:formulation}
\begin{align} 
\theta^* = \argmin_{\theta} \quad & \frac{1}{|N|} \sum_{i\in N} \; {\cal L}(\hat{f}_{\theta}(\mathbf{q}_i),\mathbf{y}_i^*) \label{eq:emprirical risk}
\end{align}
\end{subequations}
where ${\cal L}$ is the loss function (e.g., the L1-distance between the predicted
and optimal trailer decisions).

The machine learning model used in this paper is a deep neural network as
illustrated in Figure \ref{fig:ml:architecture}.  It consists of a
Multi-Layer Perceptron (MLP) and a series of operators such as
reshaping, masking, and rounding.  The MLP comprises multiple layers,
where each layer includes a dense perceptron, a batch normalization
(\cite{ioffe2015batch}), a dropout component
(\cite{srivastava2014dropout}), and a ReLU (Rectified Linear Unit)
activation function.  

\begin{figure}[!t]
    \includegraphics[width=\textwidth]{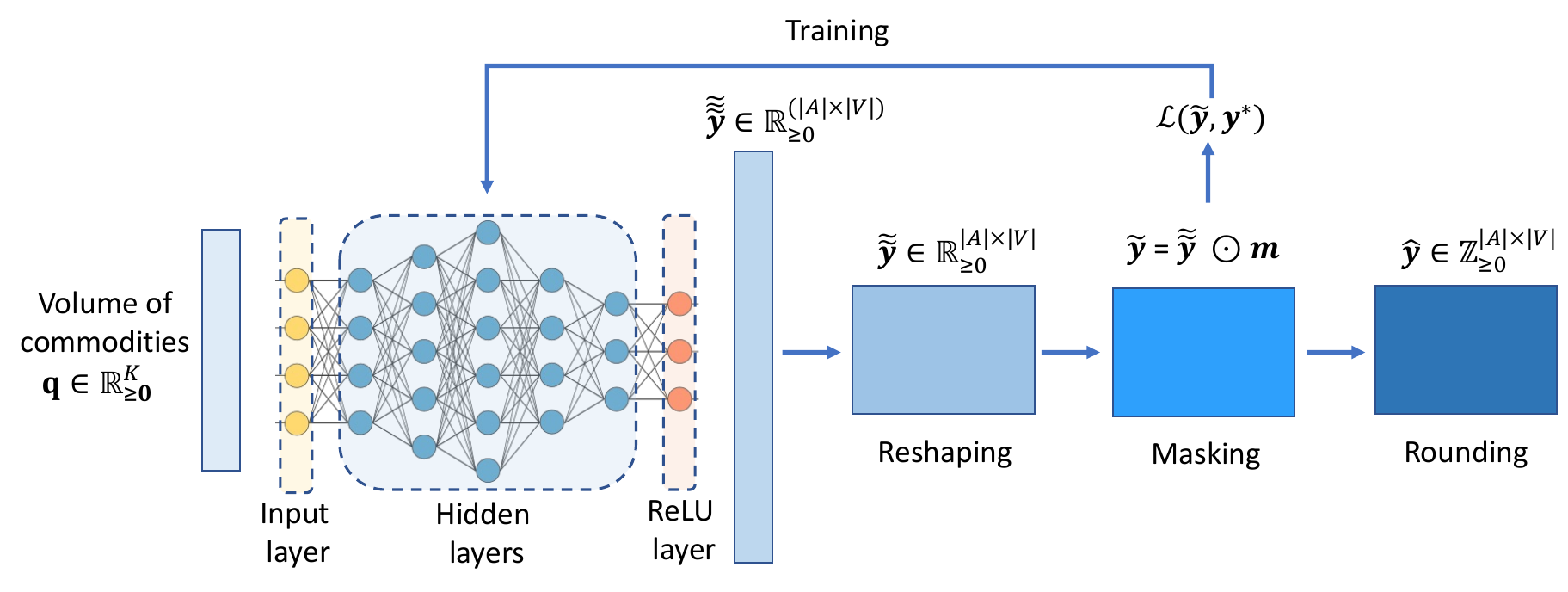}
    \caption{The Machine Learning Model for the LOLPP.}
    \label{fig:ml:architecture}
\end{figure}

The MLP maps the input parameter $\mathbf{q}$ to
a non-negative trailer decision vector
$\Tilde{\Tilde{\Tilde{\mathbf{y}}}}$ with dimension $(|A|\times|V|)$.
The reshaping operator converts the vector into a matrix
$\Tilde{\Tilde{\mathbf{y}}} \in \mathbb{R}_{\geq 0 }^{|A|\times |V|}$,
where entry $\Tilde{\Tilde{y}}_{a,v}$ could be interpreted as the
approximated number of trailers with equipment type $v \in V$ for
outbound terminal $a \in A$.  The masking operator sets the trailer
decisions on the incompatible terminals to be zero by computing
the Hadamard product $\Tilde{\mathbf{y}} = \Tilde{\Tilde{\mathbf{y}}}
\odot \mathbf{m}$, where $\mathbf{m}$ denotes the binary compatibility
matrix with $m_{a, v} = 1$ indicating that equipment type $v \in V$ is
compatible with terminal $a \in A$ and $m_{a,v} = 0$ otherwise.

The model is trained with a smoothed-$L_1$ loss, 
\begin{equation*}
    {\cal L}(\Tilde{\mathbf{y}}, \mathbf{y}^*) = \sum_{a\in A, v \in V} 0.5(\Tilde{y}_{a,v} - y^*_{a,v})^2 \mathbb{1}_{\{|\Tilde{y}_{a,v} - y^*_{a,v}| < 1\}} + (|\Tilde{y}_{a,v} - y^*_{a,v}| - 0.5)\mathbb{1}_{\{|\Tilde{y}_{a,v} - y^*_{a,v}| \ge 1\}}, 
\end{equation*}
where $\mathbb{1}$ denotes the indicator function.  The gradient of
the loss is used to update the parameters of the MLP using stochastic
gradient descent (\cite{kingma2014adam}) with backpropagation
(\cite{rumelhart1986learning}).  At inference time, the rounding
operator ensures that the output trailer decisions are integers.

\subsection{The Repair Procedure}

At inference time, the machine learning model produces an
approximation to the number of trailers $\hat{y}_{a,v}$ allocated
to each outbound terminal $a \in A$ and equipment type $v \in
V_a$.
%Let $\hat{\Lambda}_a = \sum_{v \in V_a}Q_v (\widehat{y}_{a,v})$ for all $a \in A$.
A commodity volume allocation plan can be obtained by solving the following
set of equations:
\begin{subequations}\label{feasibility_lp}
    \begin{align}
      \quad & \sum_{a \in A^k} \sum_{v \in V_a} \hat{x}^k_{a,v} = q^k, & \quad \forall  k \in K,  \label{inf_volume}\\
               \ \quad & \sum_{k \in K:a \in A^k}  \hat{x}^k_{a,v}\leq Q_v (\hat{y}_{a,v}), & \quad \forall  a \in A, v \in V_a,  \label{inf_capacity}\\
%         \quad & \sum_{v \in V_a}\sum_{k \in K:a \in A^k}  \hat{x}^k_{a,v}\leq \hat{\Lambda}_a, & \quad \forall  a \in A,   
         \quad & \hat{x}^k_{a,v} \geq 0 & \quad \forall k \in K, a \in A^k, v \in V_a.  \label{inf_Nonneg1}
    \end{align}
\end{subequations}
There is however no guarantee that this set of equations admits a solution given the prediction $\hat{y}$.

The repair layer for the LOLPP learning is a two-step procedure. In
the first step, the repair procedure aims at identifying a small set
of outbound terminals for which additional trailers would remove the
infeasibilities. In a second step, the repair procedure minimizes the
number of additional trailers required, while
ensuring the feasibility of the resulting load plan.

The first step is a linear program that minimizes capacity violations for the outbound terminals:
\begin{subequations}
\label{feasibility_recov_lp}
\begin{align}
    \underset{\hat{x}, z}{\text{Minimize}} \ \quad & \sum_{a \in A,v \in V_a} z_{a,v} \label{fr_obj}\\ %+ \sum_{k \in K}\sum_{a \in A_k}\sum_{v \in V_a}d^k_s x^k_{a,v}\label{fr_obj}\\
\text{subject to} 
    \ \quad & \sum_{a \in A^k} \sum_{v \in V_a} \hat{x}^k_{a,v} = q^k, & \quad \forall  k \in K,  \label{fr_FlowAssignment}\\
    \ \quad & \sum_{k \in K:a \in A^k}  \hat{x}^k_{a,v}\leq Q_v (\hat{y}_{a,v}) + z_{a,v}, & \quad \forall  a \in A, v \in V_a,  \label{fr_LoadCapacity}\\
%         \ \quad & \sum_{v \in V_a}\sum_{k \in K:a \in A^k}  \hat{x}^k_{a,v}  \leq \hat{\Lambda}_a + z_a, & \quad \forall  a \in A,  \label{fr_LoadCapacity}\\
%\ \quad & y_{v,s} \leq \beta_{v,s}, & \quad \forall  a \in A, v \in V,  \label{UpperBound}\\
         \ \quad & \hat{x}^k_{a,v},z_{a,v} \geq 0 & \quad \forall k \in K, a \in A^k, v \in V_a,  \label{fr_Nonneg1}
    \end{align}
\end{subequations}
The continuous decision variables $z_{a,v} \geq 0$ denote the capacity
violation on outbound terminal $a \in A$ and equipment $v \in
V_a$. The objective function \eqref{fr_obj} minimizes the total
capacity violations. Constraints \eqref{fr_FlowAssignment} ensure that
the total volume of every commodity is assigned to its primary and/or alternate outbound
terminal. Constraints \eqref{fr_LoadCapacity} determine the capacity
violations for each outbound terminal and equipment type. Constraints
\eqref{fr_Nonneg1} define the domain and range of variables.  If Model
\ref{feasibility_recov_lp} has an optimal objective value equal to
$0$, then it has found a feasible solution to LOLPP. Otherwise, Model
\ref{feasibility_recov_lp} has identified a set of pairs (outbound
terminal, equipment type) for which additional trailer capacity would
make it possible to recover a feasible solution; that is, those with $z^*_{a,v} > 0$ in the optimal solution $\textbf{z}^*$.

% Suppose $p_s = 1 \ \forall a \in A$. Consider an example with three terminals, i.e $\vert S \vert = 3$ and one equipment type, $shorts(WW)$, with capacity $1000$. An optimal solution might have $z_1 = 200, z_2 = 200, z_3 = 200$, even if the flow rules allow the model to produce an optimal solution $z_1=z_2=0, z_3 = 600$. From the operations perspective, the latter is a better solution because we need only one extra equipment to pack all the excess volume on terminal $s=3$; however, in the first solution we need three extra trailers. Therefore, we should define the penalty values that might help find an optimal solution which requires a small number of trailers to pack excess volume. 

The second step solves a MIP model to produce a feasible solution to
the LOLPP. The key idea of the MIP model is to allow additional
trailer capacity where capacity violations have been detected, while
keeping the remaining feasible predicted loads unchanged. This should reduce
significantly the size of the MIP model, making the repair step much
more efficient than the LOLPP model. Let $z^*$ be the optimal solution of Model
\eqref{feasibility_recov_lp} and define
\begin{align}
  AV = & \{ (a,v)  \mid  a \in A, v \in V_a \}, \\
  AV^+ = & \{ (a,v) \in AV  \mid  z^*_{a,v} > 0 \}.
\end{align}
The number of additional units of trailers $\xi_{a,v}$ of type $v \in V_a$ available to each outbound terminal in $a \in A$ is given by the
formula
\[
\xi_{a,v} = \left \lceil{\frac{z^*_{a,v}}{Q_v}}\right \rceil.
\]
Note that adding these capacities are sufficient to find a feasible
solution to the LOLPP. The goal of the MIP model is to determine the
best combination of additional capacities to add in order to minimize
costs by leveraging consolidation opportunities. Consider for instance
a set of commodities, all of which can be allocated to terminals $a_1$
and $a_2$ and a trailer $v$ with a capacity of $2$ units. Suppose the
optimal solution of Model (\ref{feasibility_recov_lp}) is
$z_{a_1,v}=z_{a_2,v}=1$. In this case, a feasible OLPP solution to
Model (\ref{CreateLoadsModel}) can be recovered by adding two
trailers, one for each terminal. However, there exists a LOLPP
solution with only one trailer on any one of the two outbound
terminals.  This can be found using the second step of the repair
procedure, represented by the following MIP model.
\begin{subequations}\label{feasibility_mip}
    \begin{align}
    \underset{u, x}{\text{Minimize}} \ \quad & \sum_{(a,v) \in AV^+} u_{a,v} \label{fr_obj_mip}\\ %+ \sum_{k \in K}\sum_{a \in A_k}\sum_{v \in V_a}d^k_s x^k_{a,v}\label{fr_obj}\\
    \text{subject to} 
    \ \quad & \sum_{a \in A^k} \sum_{v \in V_a} x^k_{a,v} = q^k, & \quad \forall  k \in K,  \label{fr_FlowAssignment_mip}\\
    \ \quad & \sum_{k \in K:a \in A^k}  x^k_{a,v}\leq Q_v \ \hat{y}_{a,v}, & \quad \forall  (a,v) \in AV \setminus AV^{+},  \label{fr_LoadCapacity1_mip}\\
    \ \quad & \sum_{k \in K:a \in A^k}  x^k_{a,v}\leq Q_v  \ (\hat{y}_{a,v} + u_{a,v})  & \quad \forall  (a,v) \in AV^{+},  \label{fr_LoadCapacity2_mip}\\
    \ \quad & x^k_{a,v} = \hat{x}^k_{a,v} & \quad \forall k \in K, (a,v) \in AV \setminus AV^{+},  \label{fixed_flows} \\
         %\ \quad & y_{v,s} \leq \beta_{v,s}, & \quad \forall  a \in A, v \in V,  \label{UpperBound}\\
         \ \quad & x^k_{a,v}\geq 0 & \quad \forall k \in K, a \in A^k, v \in V_a,  \label{fr_Nonneg1_mip} \\
         \ \quad & u_{a,v} \in \{0,\dots,\xi_{a,v}\} & \quad \forall (a,v) \in AV^+.  \label{fr_Nonneg2_mip}
    \end{align}
\end{subequations}

\noindent The objective function in \eqref{fr_obj_mip} minimizes the
additional trailer capacity added on pairs $(a,v) \in AV^+$ with
insufficient capacity. Constraints \eqref{fr_FlowAssignment_mip}
ensure that the commodity volume is assigned to the primary and/or
alternate terminals. Constraints \eqref{fr_LoadCapacity1_mip} and
\eqref{fr_LoadCapacity2_mip} ensure that the commodity volume
allocated to each pair $(a,v)$ respects the trailer capacities:
constraint \eqref{fr_LoadCapacity2_mip} have the flexibility of adding
additional capacity, while Constraints \eqref{fr_LoadCapacity1_mip}
use the trailer predictions. Constraints \eqref{fr_Nonneg1_mip} and
\eqref{fr_Nonneg2_mip} define the domain and range of variables.
Constraint \eqref{fixed_flows} can be used to fix the commodity volume
allocation to the outbound terminals with sufficient predicted
capacity of trailer type $v$ (i.e., for outbound terminals $AV
\setminus AV^+$).

Let $u_{a,v}^+$ and $x_{a,v}^+$ be the optimal solution to model \eqref{feasibility_mip}.
The feasible LOLPP solution returned by the repair step is as follows:
\begin{subequations}
\begin{align}
  \ \quad & \bar{y}_{a,v} = \hat{y}_{a,v} + u_{a,v}^+ & \quad \forall  (a,v) \in AV^{+} \\
  \ \quad & \bar{y}_{a,v} = \hat{y}_{a,v}            & \quad \forall  (a,v) \in AV \setminus AV^{+} \\
  \ \quad & \bar{x}_{a,v} = x^+_{a,v}                  & \quad \forall  (a,v) \in AV 
\end{align}
\end{subequations}

\noindent
It is important to highlight that the number of integer variables in
Model (\ref{feasibility_mip}) is $\vert AV^+ \vert$.  Effective machine learning
predictions for the number of trailers of the different types required
on each outbound terminal will result in fewer capacity violations and, hence,
fewer integer variables in Model (\ref{feasibility_mip}). The results in
Section \ref{sec: repair} compare the average
number of integer variables in the repair procedure with the number of
integer variables in Model (\ref{CreateLoadsModel}) of OLPP. One of the
key benefits of the optimization proxy is that it replaces a model
with a large number of integer decision variables with a prediction
model and a MIP model with significantly fewer variables.

\subsection{The Training Data}
\label{sec:data-augmentation}

The machine learning model is trained using a dataset of instances
$\{(\mathbf{q}_i,\mathbf{y}_i)\}_{i \in N}$, where the inputs
$\mathbf{q}_i$ are collected from historical data. The output
$\mathbf{y}_i$ corresponding to $\mathbf{q}_i$ is obtained by solving
the LOLPP. 
%A reference plan can be obtained easily, either through historical data or by solving the LOLPP$^{-2}$ on a selected instance $\mathbf{q}_i$.
As a result, {\em the dataset is consistent by design}
and satisfies the field requirements: together all these instances
provide consistent load plans and commodity volume allocation
plans. It is useful to mention that, by virtue of being consistent,
this dataset is {\em easier to learn} than a dataset where the outputs
would be produced by the OLPP. Consider for instance Figure
\ref{fig:trajectories} that depicts approximations, using Piecewise
Linear Functions (PLF), of the solution trajectories of the OLPP and
the LOLPP for a given terminal and equipment. It is clear that the PLF
for the LOLPP is much easier to learn than the PLF for the OLPP.

\begin{figure}[!t]
\center
\includegraphics[width=0.65\textwidth]{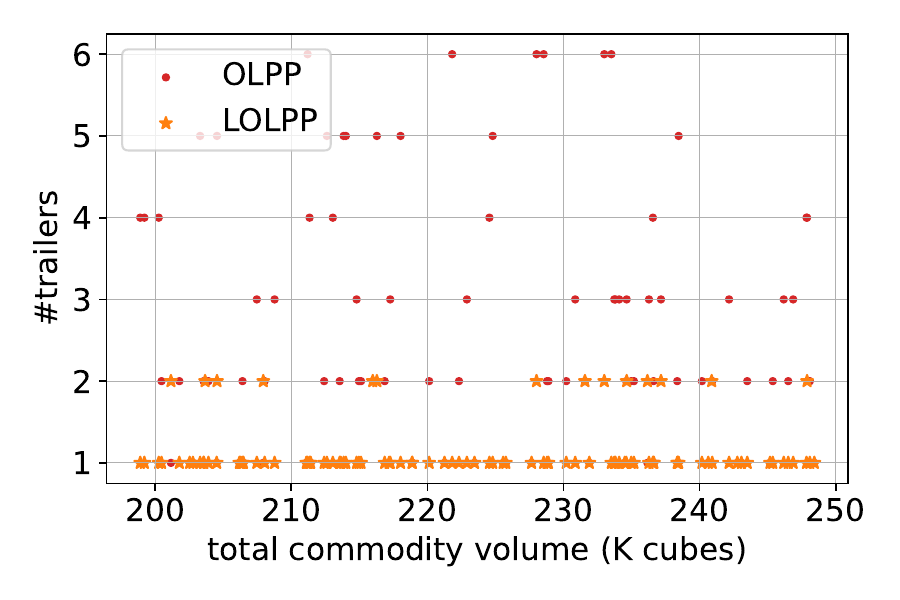}
\hfill
\caption{(color online) The Solution Trajectories of the OLPP and LOLPP on an Industrial Medium-Sized Instance.} 
\label{fig:trajectories} 
\end{figure}

%\color{blue}{The labels have to be renamed here. It may be better to show the points only OLPP and LOLPP.}
%{\color{red} line plot in m12\_sp1\_2.pdf and scatter plot m12\_sp1\_2\_scatter.pdf}

The machine learning model
presented earlier is a ReLU-based neural networks that is capable of
learning piecewise linear functions. However, the complexity of
learning task depends on the shape of the PLFs. As the complexity of
the piecewise linear function grows with more pieces, a larger neural
network model is required to obtain high-quality approximations.

\begin{theorem} (Model Capacity) (\cite{arora2016understanding}) \label{thm:capacity}
Let $f: \mathbb{R}^d \rightarrow \mathbb{R}$ be a piecewise linear function with $p$ pieces. 
If $f$ is represented by a ReLU network with depth $k+1$, then it must have size at least $\frac{1}{2}kp^{\frac{1}{k}}-1$. Conversely, any piecewise linear function $f$ that is represented by a ReLU network of depth $k+1$ and size at most $s$, can have at most $(\frac{2s}{k})^k$ pieces. 
\end{theorem}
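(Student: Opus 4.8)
The plan is to prove the single underlying combinatorial fact — a ReLU network of depth $k+1$ and size $s$ can represent a piecewise linear function with at most $(2s/k)^k$ pieces — from which both halves follow (the first is essentially its contrapositive, with the additive ``$-1$'' absorbing the precise convention for how the input/output layers are counted toward ``size''). First I would reduce to the univariate case: fix an arbitrary line $\ell(t) = a + tb$ in $\mathbb{R}^d$; the network restricted to $\ell$ computes a piecewise linear function $\mathbb{R}\to\mathbb{R}$, and $f|_\ell$ is piecewise linear whose piece count I relate back to $p$ (see the obstacle below).

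Then I would establish two elementary lemmas on univariate piecewise linear functions. (i) If $g_1,\dots,g_m$ have $a_1,\dots,a_m$ pieces, any affine combination $b+\sum_j c_j g_j$ has at most $1+\sum_j(a_j-1)\le\sum_j a_j$ pieces, since its breakpoints lie in the union of the breakpoints of the $g_j$. (ii) If $g$ has $a$ pieces, then $\mathrm{ReLU}(g)$ has at most $2a$ pieces, since on each affine piece of $g$ the argument crosses zero at most once, adding at most one new breakpoint per piece.

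Next, write the hidden-layer widths as $w_1,\dots,w_k$ with $\sum_{i=1}^k w_i \le s$, and let $p_i$ be the maximum number of pieces of the restriction to $\ell$ of any node output in hidden layer $i$. A layer-$1$ node is $\mathrm{ReLU}$ of an affine function of $t$, so $p_1\le 2$. Combining (i) and (ii), a layer-$(i{+}1)$ node is $\mathrm{ReLU}$ of an affine combination of $w_i$ functions each with at most $p_i$ pieces, so $p_{i+1}\le 2w_i p_i$; the final linear layer contributes at most $w_k p_k$ pieces. Unrolling gives at most $2^k\prod_{i=1}^k w_i$ pieces, and AM--GM ($\prod_i w_i \le (s/k)^k$) yields the bound $(2s/k)^k$. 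Inverting, $p \le (2s/k)^k$ forces $s \ge \tfrac{k}{2}p^{1/k}$, the stated ``$-1$'' coming from aligning the size/depth counting convention with that of the cited source.

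The main obstacle is the dimension reduction: one must guarantee that restricting $f$ to a line does not collapse pieces, i.e., choose $\ell$ in general position with respect to the polyhedral decomposition realizing the $p$ pieces of $f$ (equivalently, adopt the convention in which ``$p$ pieces'' is measured by the worst-case line restriction), so that the univariate count transfers back to the $d$-dimensional one. Everything else — the two lemmas, the recursion, and the AM--GM step — is routine; the only other care needed is the boundary bookkeeping of which layers count toward depth and size, which is precisely what produces the additive ``$-1$''.
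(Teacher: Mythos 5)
First, note that the paper itself gives no proof of this statement: Theorem~\ref{thm:capacity} is imported verbatim from the cited source (\cite{arora2016understanding}) and used as a black box to motivate the data-generation strategy, so there is no in-paper argument to compare yours against; the only fair comparison is with the source's proof. Against that benchmark, your univariate core is correct and is essentially the standard argument: the two lemmas (breakpoints of an affine combination lie in the union of the summands' breakpoints; $\mathrm{ReLU}$ at most doubles the piece count), the recursion $p_{i+1}\le 2w_ip_i$ with $p_1\le 2$, the unrolling to $2^k\prod_i w_i$, the AM--GM step giving $(2s/k)^k$, and the inversion $s\ge\tfrac{k}{2}p^{1/k}\ge\tfrac{k}{2}p^{1/k}-1$ are all sound and match how the cited result is actually established.

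The genuine gap is exactly the step you flagged, and your proposed repair does not work. For $d\ge 2$, no choice of line $\ell$ ``in general position'' makes $f|_\ell$ retain the piece count of $f$: a piecewise linear function on $\mathbb{R}^2$ whose affine regions form an $n\times n$ grid has $p=n^2$ pieces, yet every line crosses only $O(n)=O(\sqrt{p})$ of them, so the univariate count cannot be transferred back. Worse, under the usual reading of ``$p$ pieces'' as the number of maximal affine regions in $\mathbb{R}^d$, the ``conversely'' half of the statement is simply false for $d\ge 2$: a single hidden layer of width $w$ can already produce $\sum_{i=0}^{d}\binom{w}{i}=\Theta(w^d)$ regions, which exceeds $(2w/1)^1$ for $k=1$. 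The resolution is that the theorem in \cite{arora2016understanding} is stated for $f:\mathbb{R}\to\mathbb{R}$; the $\mathbb{R}^d$ domain in the restatement above is only tenable if ``pieces'' is defined as the maximum piece count over one-dimensional restrictions (your parenthetical ``convention'' escape hatch, which is a redefinition of the theorem rather than a proof step). So: keep your univariate argument as the whole proof and state the result for $d=1$, or explicitly adopt the line-restriction definition of pieces; do not present the general-position choice of $\ell$ as if it preserved the region count, because it does not.
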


\noindent
As a consequence, given a fixed-size ReLU network, higher variability
of the solution trajectory typically results in higher approximation
errors. Therefore, the use of LOLPP helps circumvent this challenge as it generates consistent and stable solutions that are easier to learn.

\section{Greedy Heuristic (GH)}
\label{section:heuristics}

To provide an additional baseline to evaluate the performance of the
optimization proxies, the paper proposes a greedy heuristic that
mimics human planners to construct feasible solutions for OLPP Model
\eqref{CreateLoadsModel}. The greedy heuristic iteratively solves
linear programs (LP) until all the $y$ variables are integers, i.e.,
they satisfy the integrality tolerance ($10^{-5}$). During each
iteration, the algorithm identifies the fractional variable with
minimum value of $(\left \lceil {y_{a,v}} \right \rceil-y_{a,v})$,
updates the lower bound of the variable $y_{a,v}$ to $\left \lceil
{y_{a,v}} \right \rceil$, and re-solves the LP as shown in Algorithm
\ref{algo: greedy heuristic}. The main idea is that, for a given outbound terminal
$a \in A$ and trailer type $v \in V_a$, if $y_{a,v}$ has a fractional
value very close to an integer $\left \lceil {y_{a,v}} \right \rceil$,
then, this indicates that there is enough commodity volume to have
\textit{at least} $\left \lceil {y_{a,v}} \right \rceil$ trailers on
the arc. The heuristic \textit{greedily} adjusts the lower bound of a
$y$ variable in each iteration until all $y$ variables can be labeled
as integers, in which case a feasible solution to OLPP has been found.

\begin{algorithm}[ht!]
\caption{Greedy Heuristic}
\label{algo: greedy heuristic}
\begin{algorithmic}[1]
\State \textit{\textbf{While}} (True )\textit{ \textbf{do}}
\State  \quad Solve LP relaxation of Model \ref{CreateLoadsModel}
\State \quad \textit{\textbf{If}} (all $y$-variables in the LP solution are integers)
\State \qquad break
\State \quad \textit{\textbf{Else}}
\State \qquad $(a^*, v^*) \leftarrow \argmin_{(a,v)}(\left\lceil {y_{a,v}} \right \rceil- y_{a,v})$ 
\State \qquad Update the lower-bound of $y_{a^*,v^*}$ to $\left \lceil {y_{a^*,v^*}} \right \rceil$
\State \quad \textit{\textbf{End If}}
\State \textit{\textbf{End While}}
\end{algorithmic}
\end{algorithm}

\section{Computational Study}
\label{section:Study}

This section reports a series of experiments conducted on industrial
instances provided by our research partner. Section \ref{instances} presents
statistics for the problem instances. Section \ref{experiment-setup}
discusses the experimental setting for the optimization models and
optimization proxies. Section \ref{compute-performance-proxies}
evaluates the computational performance of the optimization proxies
against the greedy heuristic (GH) and the optimization models (the
OLPP and and the LOLPP). Section \ref{sec: sym-break-data-gen}
evaluates the benefits of the data generation for learning. Section
\ref{sec: repair} discusses the efficiency of the repair procedure and
how it depends on the prediction accuracy.

\subsection{Instances} \label{instances}

The experiments are based on industrial instances for three different
terminals in the service network of our industry partner:
\textit{small} (S), \textit{medium} (M), and \textit{large} (L). Table
\ref{stats_instances} reports the statistics of the instances with
\textit{$\#$Terminals} denoting the total number of unique outbound
terminals, \textit{$\#$Commodities} denoting the number of commodities
that are sorted at the terminal and loaded into outbound trailers
(rounded to nearest multiple of $1,000$), and \textit{$\#$Trailers}
denoting the number of planned trailers in the reference plan for the
corresponding terminals (rounded to the nearest multiple of 50); only
pup trailers are considered in the experiments. Note that, in addition
to the planned trailers, parcel delivery companies typically operate
\textit{empty} trailers to the outbound terminals for trailer
repositioning. This study only considers trailers that are planned to
have commodity volume and does not include planned empty trailer
capacity.
%The experiments consider only one trailer type.
\begin{table}[!t]
\centering
\caption{Data Statistics for Test Instances.}
\begin{tabular}{c|c|c|c}
\toprule
% \midrule
Category & \#Terminals & \#Commodities & \#Trailers\\
\hline
S & 92 & 9,000 &150 \\
M & 399 & 15,000 & 550 \\
L & 1,602 & 20,000 & 2,000 \\
\bottomrule
\end{tabular}%
\label{stats_instances}%
\end{table}%
% XS: 0209, S: 6029, M1: 6819, M2: 8029, L1: 3039, L2: 3033, XL: 6059
% cubic volume includes volume assigned to RET
% loads include RET and exclude empty
% \textcolor{blue}{I have defined \textit{primary only plan}, \textit{reference plan} and \textit{optimized plan} in computational study section.}

It is worth highlighting that the L instance has more planned capacity
than the S and M instances combined. Table \ref{stats_mipmodel_var}
reports some statistics for Model \eqref{CreateLoadsModel} for the
three instances: $\#$Integer-Vars and $\#$Continuous-Vars denote the
number of integer and continuous decision variables, respectively, and
$\#$Constraints denotes the total number of constraints.

\begin{table}[!t]
\centering
\caption{Optimization Model Statistics for Model \ref{CreateLoadsModel}} 
% \newcolumntype{g}{R{25mm}{100}{}{3mm}}
\begin{tabular}{l|rrr}
\toprule
Instance&  \#Integer-Vars & \#Continuous-Vars & \#Constraints\\
\midrule
S & 187 & 50,713 & 8,962 \\
M & 846 & 60,023 & 15,583 \\
L & 4,475 & 107,854 & 24,931 \\
\bottomrule
\end{tabular}
\label{stats_mipmodel_var}%
\end{table}%

\subsection{The Experimental Setting}\label{experiment-setup}

\paragraph{Parameters for the LOLPP}
The cost (denoted by $d^k_a$) of assigning commodity $k$ to an outbound terminal $a \in A^k$
is defined as 
\begin{align}
d^k_a = {}& \begin{cases}
        0,& \text{if $a$ is the primary outbound terminal for commodity $k$} \\
        \left(\alpha^k_a + \beta^k \right)& \text{otherwise}
        \end{cases} \label{flow_divert}
% \beta_k = {}&\begin{cases}
%             1, & \text{if commodity $k$ belongs to one-day service class} \\
%             2, & \text{if commodity $k$ belongs to two-day service class} \\
%             3, & \text{if commodity $k$ belongs to three-day service class} \\
%             4, & \text{otherwise} 
%             \end{cases}\label{beta_k}
%\epsilon ={}& \frac{1}{\left(\max_{k \in K, a \in A^k} \ d^k_a \ \right)\sum_{k \in K} q^k} \label{epsilon}
\end{align} 
\noindent
where $\alpha^k_a$ denotes the distance between terminal $a \in A^k$
and the final destination of commodity $k \in K$. Recall that a
commodity $k \in K$ is defined as the set of all packages with the
same destination and service class. The term $\alpha^k_a$ ensures that
two commodities with different final destinations have different
flow diversion costs from the current terminal to the next outbound terminal. Parameter $\beta_k$
depends on the commodity service class and ensures that the
commodities with the same final destination (and hence, the same
value of $\alpha^k_a$) but different service classes (e.g., one-day service, two-day service, or three-days service)
have different flow diversion costs.

%The weight for the flow diversion
%cost in the objective function \eqref{obj_s2} is defined in
%\eqref{epsilon}.
%However, two commodities with different service classes can have the same destination. Therefore, 

\paragraph{Data Generation for ML Model} 

The dataset is generated by perturbing the input parameters of
real-life instances provided by the industry partner with up to
$20,000$ commodities. Let $\mathbf{p}^{ref}$ be the volume of
different commodities in a historical plan. The OLPP/LOLPP instances
are generated by perturbing $\mathbf{p}^{ref}$. Namely, for instance
$i$, $\mathbf{p}^{(i)} = \gamma^{(i)}\times\eta^{(i)}\times
\mathbf{p}^{ref}$, where $\gamma^{(i)} \in \mathbb{R}$ denotes a
global scaling factor and $\eta \in \mathbb{R}^{|K|}$ is the commodity
level multiplicative white noise. The value $\gamma$ is sampled from a
uniform distribution $U[80\%, 120\%]$ and, for every commodity, the
value $\eta$ is sampled from a normal distribution with mean 1 and
standard deviation of $0.05$.  For every category, $10,000$ instances
are generated, and a commercial solver is used to solve the OLPP/LOLPP
models for each instance. The dataset of 10,000 instances for each
category is then split as follows: $80\%$ for the training set, $10\%$
for the validation set, and $10\%$ for the test set.

\paragraph{Performance Metrics}

The performance metrics in this study are designed to compare the
\textit{total trailer capacity} of the solutions generated by the
optimization proxies and the OLPP Model, and the \textit{consistency}
of the solutions generated by the optimization proxies and the LOLPP
model.  Given an instance $\mathbf{p}$ with optimal trailer decisions
$\mathbf{y}^*$ and a feasible trailer decision $\Bar{\mathbf{y}}$, the
optimality gap is defined as
\[\text{Gap} = (\Bar{Z} - Z^*)/|Z^*|,\]
where $Z^*$ is the optimal trailer capacity of OLPP (Model
\eqref{CreateLoadsModel}), and $\Bar{Z}$ is the trailer capacity
computed from ${\mathbf{\Bar{y}}}$. If the OLPP model cannot be solved
to optimality in 30 minutes, then the best lower bound obtained from
the solver run is used to compute the optimality gap instead of $Z^*$.

% From the planners perspective, load plan modifications that are closer to the reference plan are easier to implement, as the planner needs to evaluate fewer options before approving the plan.

This paper proposes two metrics to quantify the consistency of a
solution. The first metric ($\Delta$)
\begin{align}
\Delta_{a,v} &= 
\begin{cases}
        |y_{a,v}  - y^{\text{ref}}_{a,v}|& \text{if $y^{\text{ref}}_{a,v} = 0$} \\
        |y_{a,v}  - y^{\text{ref}}_{a,v}|/y^{\text{ref}}_{a,v},& \text{otherwise}
        \end{cases} \ \quad \forall a \in A, v \in V_a\label{metric:normalized_distance1},\\
        \Delta &= \exp \left(\frac{1}{|A||V|} \sum_{a\in A}\sum_{v \in V_a} \log (\Delta_{a,v} + 0.01) \right) - 1\% , \label{metric:normalized_distance2}
\end{align}
where $\Delta$ is the normalized distance between the trailer
decisions in the optimized (load) plan and the trailer decisions in
the reference (load) plan $\mathbf{y}^{\text{ref}}$. It aggregates the
distances using a shifted geometric mean in order to reduce the
influence of extreme values or outliers. From a planner's perspective,
this metric captures the deviation of the trailer decisions in the
optimized plan with respect to the trailer decisions in the reference
plan. Load plans that require smaller total trailer capacity and are
as close as possible to the reference plan are highly desirable as
they reduce the planner's effort to evaluate and approve the plans.

The second metric is the total variation of the set of trailer
decisions in the $N$ instances (for each terminal). For simplicity,
instances are ordered such that $\sum_{k \in K} q^k_{i} \leq \sum_{k
  \in K} q^k_{i+1} \; \forall \ i \in N$. The goal is to analyze the
variation in trailer decisions for outbound terminals when the total
commodity volume is incrementally increased from $\sum_{k \in K}
q^k_1$ to $\sum_{k \in K} q^k_{|N|}$. Let $\{\mathbf{y}_i\}_{i \in N}$
denote the set of trailer decisions of instance set $N$. The total
variation is defined as:
\[\text{TV}(\{\mathbf{y}_i\}_{i \in N}) = \sum_{i \in N}\|\mathbf{y}_{i+1} - \mathbf{y}_{i}\|_2.\]
The total variation metric captures the sensitivity of the models,
i.e., the impact of changes in total commodity volume on the trailer
decisions for different outbound terminals. Lower total variation
implies that the trailer decisions are less sensitive to changes in
total commodity volume. Planners are more amenable to such solutions
because fewer (but effective) load plan adjustments reduce the
solution evaluation effort. They are also easier to execute in
practice.

The computational efficiency of different models is measured by the
training time of optimization proxies, including the data generation
time and the inference time. Unless specified otherwise, the average
metrics on the test dataset are reported in shifted geometric means to
reduce the effect of outliers. For a performance metric vector
$\{w\}_{i=1}^{n}$ the shifted geometric mean $\mu_s(w_1, \cdots, w_n)$
is defined as:
\[
\mu_s(w_1, \dots, w_n) = \exp \left(\frac{1}{n} \sum_{i=1}^{n} \log (w_i + s) \right) - s,
\]
where $s$ denotes the shift. The experiments use $s = 0.01$ for the
optimality gap and normalized distance metrics, $s = 1$ second for the
inference/solving time, and $s = 1$ for the distance between trailer
decisions in the optimized plan and in the reference plan.

\paragraph{Implementation Details}

All optimization models were coded in Python 3.9 using Gurobi 9.5
(\cite{gurobi}) as the MIP solver with $8$ CPU threads and default
parameter settings, except for \textit{MIPFocus} which is set to a
value of $3$.  All deep learning models are implemented using PyTorch
(\cite{paszke2019pytorch}) and trained using the Adam optimizer
(\cite{kingma2014adam}).  The ML models are multiple layer perceptron
and are hyperparameter-tuned using a grid search with learning rate in
$\{10^{-1}, 10^{-2}\}$, number of layers in $\{3, 4, 5\}$, and hidden
dimension in $\{128, 256\}$. For each system, the best model is
selected on the validation set and the performances on the test set
are reported. Experiments are conducted on dual Intel Xeon 6226@2.7GHz
machines running Linux, on the PACE Phoenix cluster (\cite{PACE}).
The training of ML models is performed on Tesla V100-PCIE GPUs with
16GBs HBM2 RAM.

\subsection{Computational Performance of the Optimization Proxies}
\label{compute-performance-proxies}

This section presents numerical experiments used to assess the
performance of the proposed optimization proxies (henceforth referred to as \textit{Proxies}) against the
optimization models and the greedy heuristic (\textit{GH}).

\paragraph{Optimality Gap}

Table~\ref{tab: opt_gap} presents the optimality gaps of various
approaches, including the results of the OLPP under various
computational time limits. In the table, the columns under OLPP report
the optimality gaps of the OLPP model with different computational
time limits. Columns \textit{Gap} and \textit{Time(s)} under
\textit{GH} and \textit{Proxies} report the optimality gaps of their
feasible solutions and the required computational times, in seconds.

Recall that the OLPP model produces
solutions that exhibit considerable variability when the total
commodity volume is perturbed; this observation is shown in
Table~\ref{tab: dist_base_plan} and \ref{tab: total_variation}. As
such, it is unlikely to be practical in scenarios with planners in the
loop. Hence, Table \ref{tab: opt_gap} compares the optimization
proxies and the greedy heuristic with what should be considered as an
``idealized'' benchmark.  With this caveat in place, observe the
performance of the optimization proxies under tight time constraints:
the proxies generate solutions with low optimality gaps and they are
up to 10 to 50 times faster than GH, and around 10 times faster than
the OLPP model solved with Gurobi.  Second, although the OLPP model
produces solutions with low optimality gaps, closing the optimality
gap proves to be a significant challenge due to the poor LP
relaxation.  The performance of GH is also impeded by the
inefficiencies of the LP relaxation, as it solves the LP relaxations
over many iterations; it takes GH around $30$ iterations for terminal
S, $200$ iterations for terminal M, and more than $1000$ iterations
for terminal L to generate a feasible solution.

% Similarly, columns \textit{Gap} for GH
% and Proxies denote  optimality gaps for  GH and the
% optimization proxies. In addition, columns \textit{Time(s)} denote the solving times for GH and Proxies.

\begin{table}[!t]
\centering
\caption{Optimality Gap (\%) with respect to the Total Trailer Capacity} 
\begin{tabular}{l|rrrrrrrrrr}
\toprule
\multicolumn{1}{l}{\multirow{2}{*}{Instance}} & \multicolumn{6}{c}{OLPP} & \multicolumn{2}{c}{GH} & \multicolumn{2}{c}{Proxies} \\
\cmidrule(lr){2-7} \cmidrule(lr){8-9} \cmidrule(lr){10-11}
\multicolumn{1}{l}{} & \multicolumn{1}{c}{1s} & \multicolumn{1}{c}{5s} & \multicolumn{1}{c}{10s} & \multicolumn{1}{c}{30s} & \multicolumn{1}{c}{60s} & \multicolumn{1}{c}{1800s} & \multicolumn{1}{c}{Gap} & \multicolumn{1}{c}{Time (s)} & \multicolumn{1}{c}{Gap} & \multicolumn{1}{c}{Time (s)}\\
\midrule
S & 2.59 & 0.55 & 0.48 & 0.48 & 0.48 & 0.48 & 3.84 & 3.12 & 1.14 & 0.33 \\
M & 51.15 & 5.22 & 2.18 & 1.71 & 1.41 & 1.39 & 12.85 & 13.28 & 3.80 & 1.10 \\
L & 77.35 & 14.02 & 10.41 & 2.93 & 2.07 & 0.93 & 17.01 & 121.55 & 5.21 & 2.49 \\
\bottomrule
\end{tabular}
\label{tab: opt_gap}
\end{table}

% We observe that the \textit{XS} and \textit{M2} instances are solved to optimality in a few seconds. The \textit{S, M1} instances can be solved to under $2\%$ optimality gap within the first $100$ seconds but the solver struggles to close the gap even in $1800$ seconds. For the \textit{L1, L2} and \textit{XL} instances the solver gets stuck at $4\%$ and $7\%$ optimality gap, respectively. A possible reason is that the lower bound at the root node is very weak for the problem instances. Therefore, we switch the \textit{MIPFcous} parameter to $3$ so that the solver focuses on improving the bound. Setting \textit{MIPFocus} to $3$ does not close the optimality gap for \textit{S} and \textit{M1} instances, but, we observe that the \textit{L1, L2} are solved to under $2\%$ optimality gap and \textit{XL} instance is solved to under $5\%$ optimality gap in the first $100$ seconds. The solver further improves the solution by adding a large number of mixed-integer rounding cuts (see \cite{marchand2001aggregation}) during which it makes marginal improvements in lower and upper bounds in the \textit{branch-and-bound tree}.

% We need to specify that although the feasibility restoration using an MIP model has the same complexity (theoretically), it is easy to solve in practice because most of the decisions are fixed and there are very few binary decision variables.

\paragraph{Consistency}

\begin{table}[!t]
\centering
\caption{Normalized Distance (\%) to Reference Load Plan} 
\begin{tabular}{l|rrrrrrrrrrrr}
\toprule
\multicolumn{1}{l}{\multirow{2}{*}{Instance}} & \multicolumn{5}{c}{OLPP ($\Delta$)} & \multicolumn{2}{c}{LOLPP} & \multicolumn{2}{c}{GH} & \multicolumn{2}{c}{Proxies} \\
\cmidrule(lr){2-6} \cmidrule(lr){7-8} \cmidrule(lr){9-10} \cmidrule(lr){11-12}
\multicolumn{1}{l}{} & \multicolumn{1}{c}{1s} & \multicolumn{1}{c}{5s} & \multicolumn{1}{c}{10s}  & \multicolumn{1}{c}{60s} & \multicolumn{1}{c}{1800s} & \multicolumn{1}{c}{$\Delta$} & \multicolumn{1}{c}{t (s)} & \multicolumn{1}{c}{$\Delta$} & \multicolumn{1}{c}{t (s)} & \multicolumn{1}{c}{$\Delta$} & \multicolumn{1}{c}{t (s)} \\
\midrule 
S & 13.53 & 13.43 & 13.42 & 13.19 & 13.20 & 5.18 &3600 & 14.39 & 3.12 & 5.19 & 0.33 \\
\midrule
M & 18.19 & 13.31 & 12.69 & 12.76 & 12.69 & 4.01  & 3600 & 17.07 & 13.28 & 3.75 & 1.10 \\
\midrule
L & 21.86 & 7.99 & 7.96 & 7.47 & 7.23 & 3.52 & 3600 & 10.08 & 121.55 & 2.97 & 2.49 \\
\bottomrule
\end{tabular}
% \\ $^\dagger$ \wc{Explain why $0.45$ for LOLPP in 1s}
\label{tab: dist_base_plan}
\end{table}

\begin{table}[!t]
\centering
\caption{Total Variation of Load or Trailer Decisions} 
\begin{tabular}{l|rrrrrrrrrrr}
\toprule
\multicolumn{1}{l}{\multirow{2}{*}{Instance}} & \multicolumn{5}{c}{OLPP ($TV$)} & \multicolumn{2}{c}{LOLPP} & \multicolumn{2}{c}{GH} & \multicolumn{2}{c}{Proxies} \\
\cmidrule(lr){2-6} \cmidrule(lr){7-8} \cmidrule(lr){9-10} \cmidrule(lr){11-12}
\multicolumn{1}{l}{} & \multicolumn{1}{c}{1s} & \multicolumn{1}{c}{5s} & \multicolumn{1}{c}{10s}  & \multicolumn{1}{c}{60s} & \multicolumn{1}{c}{1800s} & \multicolumn{1}{c}{$TV$} & \multicolumn{1}{c}{t (s)} & \multicolumn{1}{c}{$TV$} & \multicolumn{1}{c}{t (s)} & \multicolumn{1}{c}{$TV$} & \multicolumn{1}{c}{t (s)} \\
\midrule
S & 2097 & 2063 & 2126 & 2040 & 2054 & 392 & 3600 & 2444 & 3.12 & 115 & 0.33 \\
\hline
M & 2655 & 3275 & 2961 & 3134 & 3174 & 1201 & 3600 & 4797 & 13.28 & 347 & 1.10 \\
\hline
L & 6428 & 9174 & 8228 & 6914 & 7047 & 5228 & 3600 & 11843 & 121.55 & 565 & 2.49 \\
\bottomrule
\end{tabular}
\label{tab: total_variation}
\end{table}

Tables~\ref{tab: dist_base_plan} and \ref{tab: total_variation} report
the consistency of solutions obtained from different models in terms
of the normalized distance to the reference load plan and the total
variation of the generated solutions. When optimizing the LOLPP, the
experiments set a time limit of 1,800 seconds on the first
sub-objective. The best primal bound is then used for the optimization
of the last two sub-objectives with a time limit of 1,800 seconds
again. This second optimization optimizes a weighted function of
the second and third sub-objective where the second sub-objective has a weight of $1$ and
the third sub-objective has a small weight $\epsilon$,
i.e.,
\begin{align*}
\epsilon ={}& \frac{1}{\left(\max_{k \in K, a \in A^k} \ d^k_a \ \right)\sum_{k \in K} q^k}. 
\end{align*}
and minimizes the objective
\begin{align*}
\sum_{a \in A} \sum_{v \in V_a}\vert y_{a,v} - \gamma_{a,v} \vert + \epsilon \sum_{k \in K}\sum_{a \in A_k}\sum_{v \in V_a}d^k_a x^k_{a,v}
\end{align*}
% It is worth highlighting that the value of $\epsilon$ is chosen in such a way that the commodity volume allocation decisions (x variables) do not affect the optimal value of the trailer decisions (y variables). 

% Note that in both Tables \ref{tab: dist_base_plan} and \ref{tab: total_variation}, the performance metrics values $\Delta$ and TV under LOLPP are the same as the corresponding values of Model LOLPP under the 1800s column under Optimization Models.

{\em The high-level result is that proxies are ideally suited to
  produce consistent plans.} Table~\ref{tab: dist_base_plan} shows
that the proxies accurately predict, in a few seconds, the results
produced by LOLPP after an hour. Furthermore, Table \ref{tab:
  total_variation} shows that proxies produce solutions that have at
least an order of magnitude smaller total variations in trailer
decisions than both LOLPP and GH. Proxies produce load plans
that exhibit greater stability when the total commodity volume
changes for the given terminal.

The fact that proxies improve the consistency of the LOLPP plans is
especially interesting: it means that the optimization proxies, by
virtue of the learning step, avoid oscillations present in the LOLPP
approach. Of course, it does so at a small loss in objective value
(if, for instance, the LOLPP model is allowed a minute to run instead
of the 2.5 seconds of the optimizations). However, the consistency
benefits as shown in Table \ref{tab: total_variation} are substantial
and important for practical implementation. The proxies also provide
dramatic improvements over GH.

% In Table \ref{tab: dist_base_plan}, observe that the normalized
% distance for solution from LOLPP for the medium (M) instance first
% increases from $0.45\%$ to $11.40\%$, and then follows the
% \textit{expected} decreasing trend with an increase in computational
% time limit. Recall that LOLPP first minimizes the total trailer
% capacity, and then minimizes the sum of the Hamming distance of the
% trailer decisions in the solution from the reference load plan and the

% flow diversion cost. As shown in Table \ref{tab: opt_gap}, the
% feasible solution obtained from the OLPP after $1$ second is of poor
% quality and is closer to the reference load plan in terms of the total
% trailer capacity used. Hence, the normalized distance value is
% small. As the computational time limit increases, the feasible
% solution obtained by minimizing trailer capacity improves and the
% normalized distance increases.  With further increases in
% computational time, the normalized distances decrease as the solver
% finds a better solution with a smaller Hamming distance and the total
% flow diversion cost using the LOLPP model.

\subsection{Illustrating the Variations in Trailer Capacities}
\label{sec: sym-break-data-gen}

As discussed in Section \ref{sec:LOLPP}, the optimal (or near-optimal)
trailer decisions of the OLPP are very sensitive to changes in total
commodity volume due to the presence of symmetries in the
model. Figure \ref{fig:sym-break} compares the variations in trailer
decisions on $9$ different outbound terminals from the given terminal
in the best feasible solutions obtained from the OLPP, the LOLPP, and
the optimization proxy in the small (S) instance.

\begin{figure}[!t]
    \begin{subfigure}[t]{0.33\textwidth}
        \includegraphics[width=\textwidth]{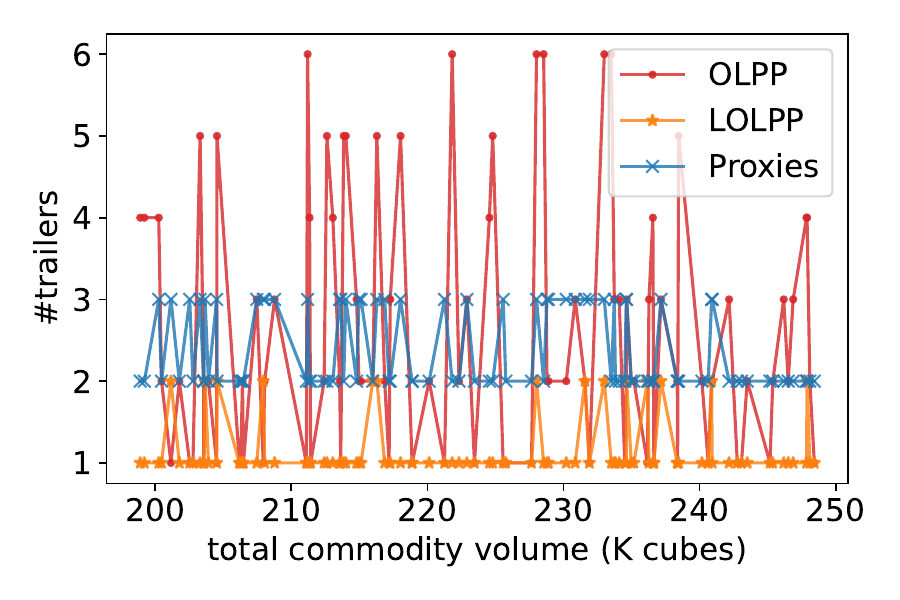}
     \end{subfigure}%
     \hfill
    \begin{subfigure}[t]{0.33\textwidth}
        \includegraphics[width=\textwidth]{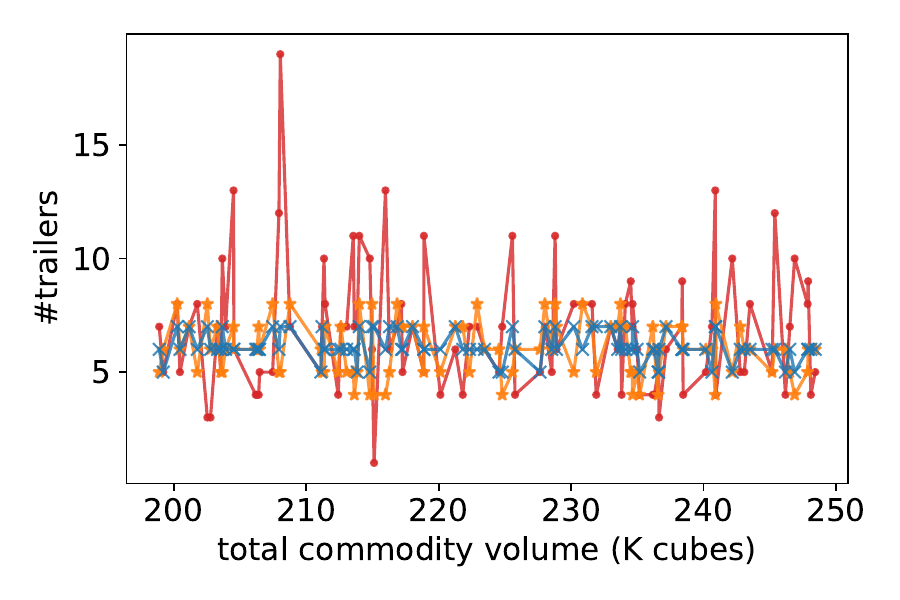}
     \end{subfigure}%
     \hfill
    \begin{subfigure}[t]{0.33\textwidth}
        \includegraphics[width=\textwidth]{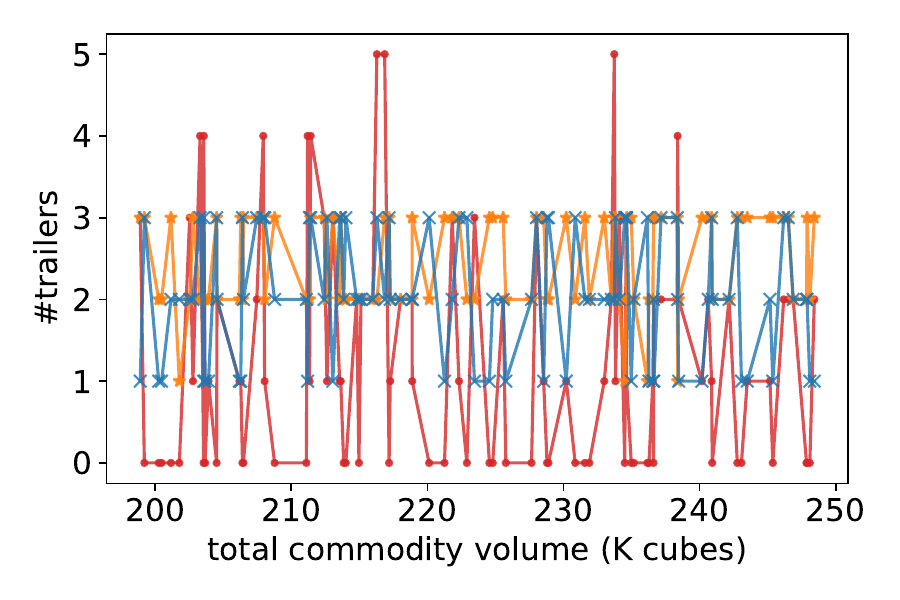}
     \end{subfigure}%
     \hfill
    \begin{subfigure}[t]{0.33\textwidth}
        \includegraphics[width=\textwidth]{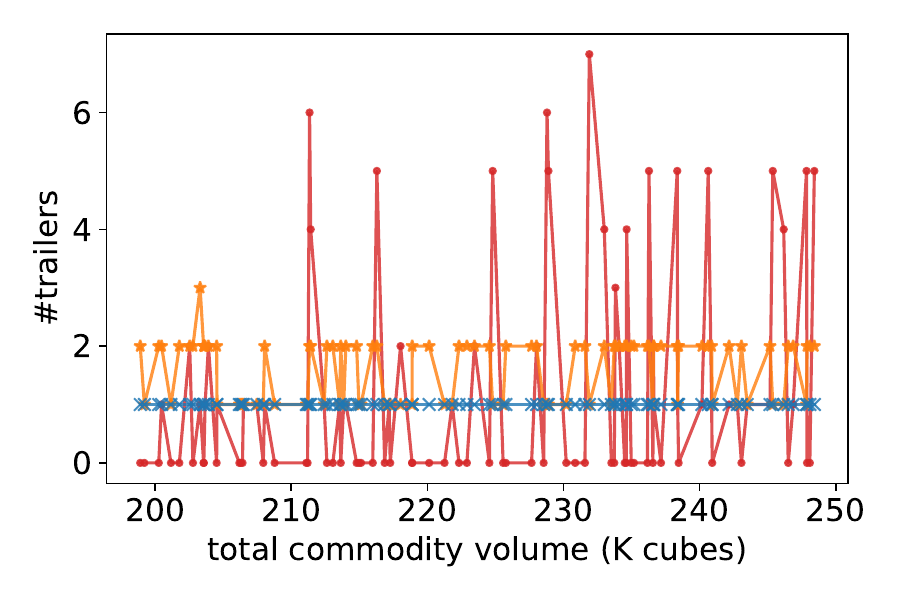}
     \end{subfigure}%
     \hfill
    \begin{subfigure}[t]{0.33\textwidth}
        \includegraphics[width=\textwidth]{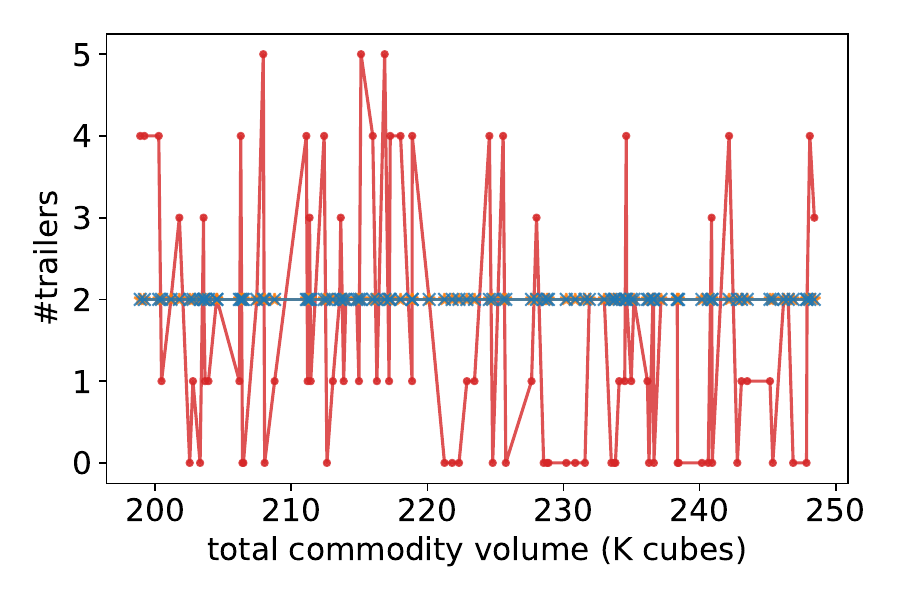}
    \end{subfigure}%
    \hfill
    \begin{subfigure}[t]{0.33\textwidth}
        \includegraphics[width=\textwidth]{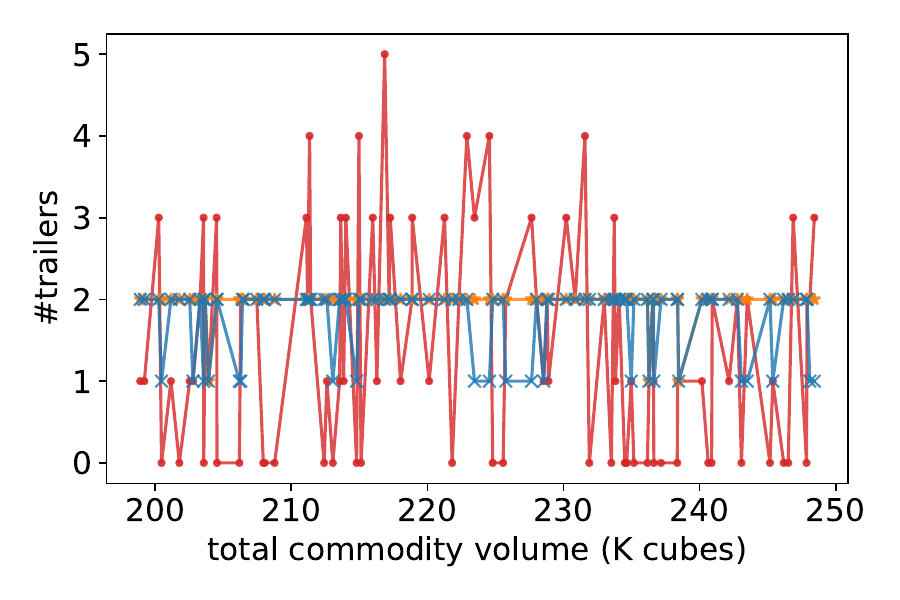}
    \end{subfigure}%
     \hfill
    \begin{subfigure}[t]{0.33\textwidth}
        \includegraphics[width=\textwidth]{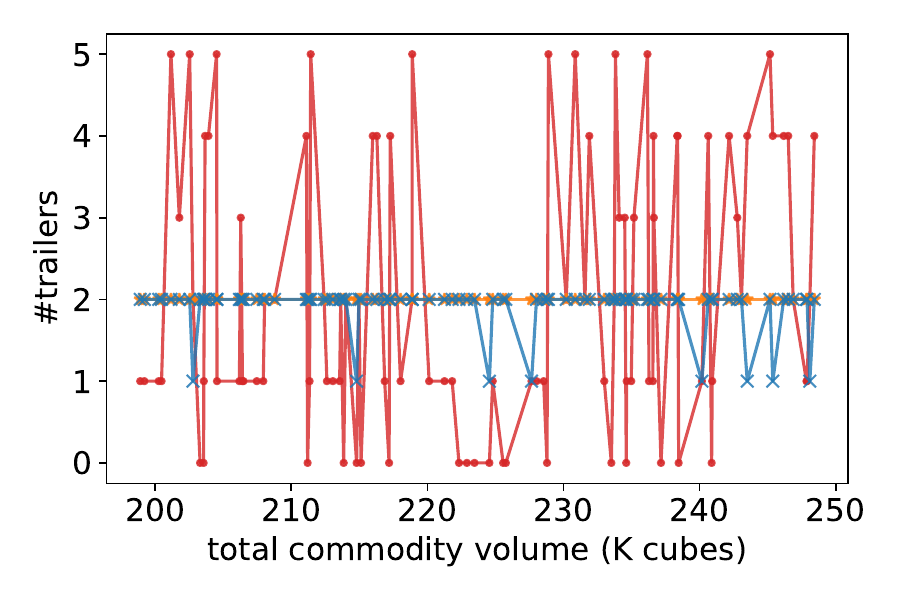}
     \end{subfigure}%
     \hfill
    \begin{subfigure}[t]{0.33\textwidth}
        \includegraphics[width=\textwidth]{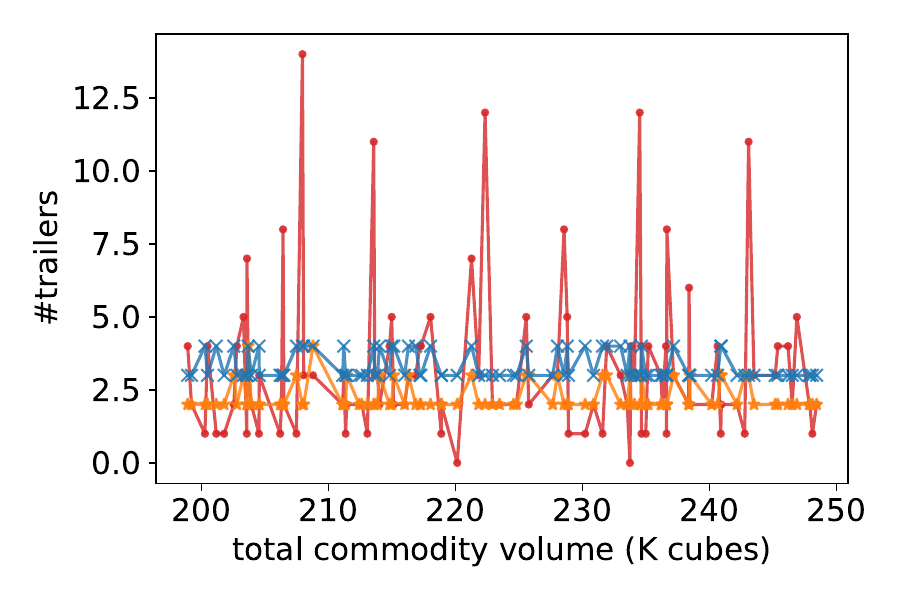}
    \end{subfigure}%
    \hfill
    \begin{subfigure}[t]{0.33\textwidth}
        \includegraphics[width=\textwidth]{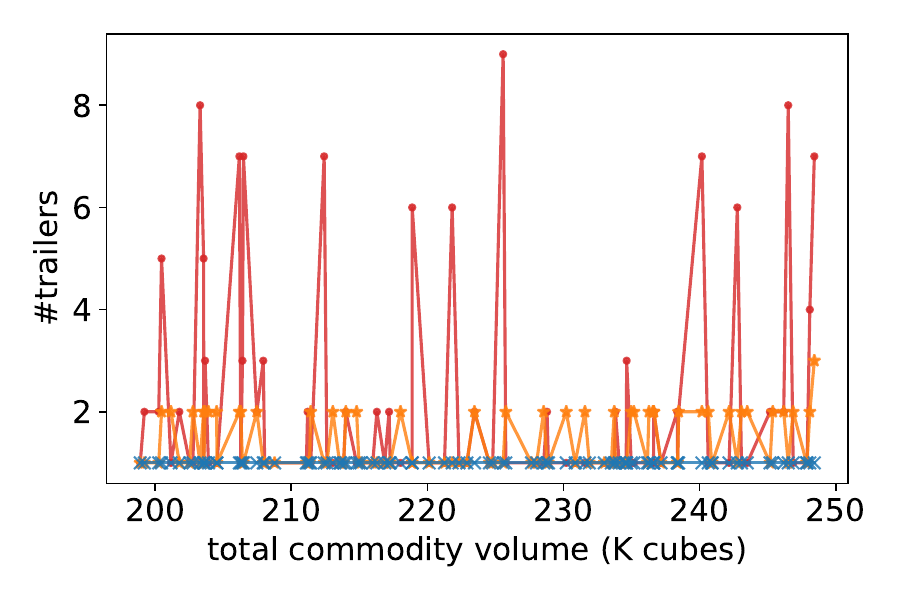}
    \end{subfigure}%
    \caption{(color online) Illustration of the sensitivity of trailer decisions to
      changes in total commodity volume on different terminals for
      small (S) instances. Trailer decisions of the OLPP are very
      sensitive to changes in total commodity volume, whereas the
      LOLPP generates consistent load plans. The proxy learns the
      LOLPP solution patterns and produces consistent
      solutions.}\label{fig:sym-break}
\end{figure}

The dramatic variation in trailer decisions on an terminal in the OLPP
solutions is not desirable in environments with planners in the loop,
where similar solutions are expected for similar instances. The LOLPP
approach is much more consistent, and its solutions are shown in
\textit{orange} in the plots of Figure \ref{fig:sym-break}. As shown
in \textit{blue} plot, the optimization proxy is effective in
producing solutions that are close to the solutions generated by
LOLPP. It should be highlighted that the LOLPP approach has two
benefits. First, it generates consistent solutions that are amenable
to planner-in-the-loop environments. Second, it makes the learning
problem much more tractable since designing a proxy for the OLPP is
really challenging due to the inherent symmetry in the solution space
and the sensitivity of the model to minor changes in commodity volume.
%: an
%optimization proxy for the OLPP has a tendancy to return an average
%value for the number of trailers to operate on an terminal.
%and the randomized nature of MIP solvers

\subsection{Accuracy of the ML Predictions and Efficiency of the Repair Procedure}
\label{sec: repair}

Figure \ref{fig:load_proportion} shows the distribution of the
predicted trailer capacity as a percentage of the total trailer
capacity in the feasible solution generated by the proxies for each
instance. The trailer capacity predicted by the machine learning model
on each terminal is very close to a feasible solution: only a few
trailers must be added to a small set of outbound terminals to recover a
feasible solution. Indeed, the results show that more than $98\%$ of
the trailer capacity is predicted correctly and less than $2\%$ comes
from the repair procedure.

\begin{figure}[!t]  %!t
    \centering
    \includegraphics[width=8cm, height=6cm]{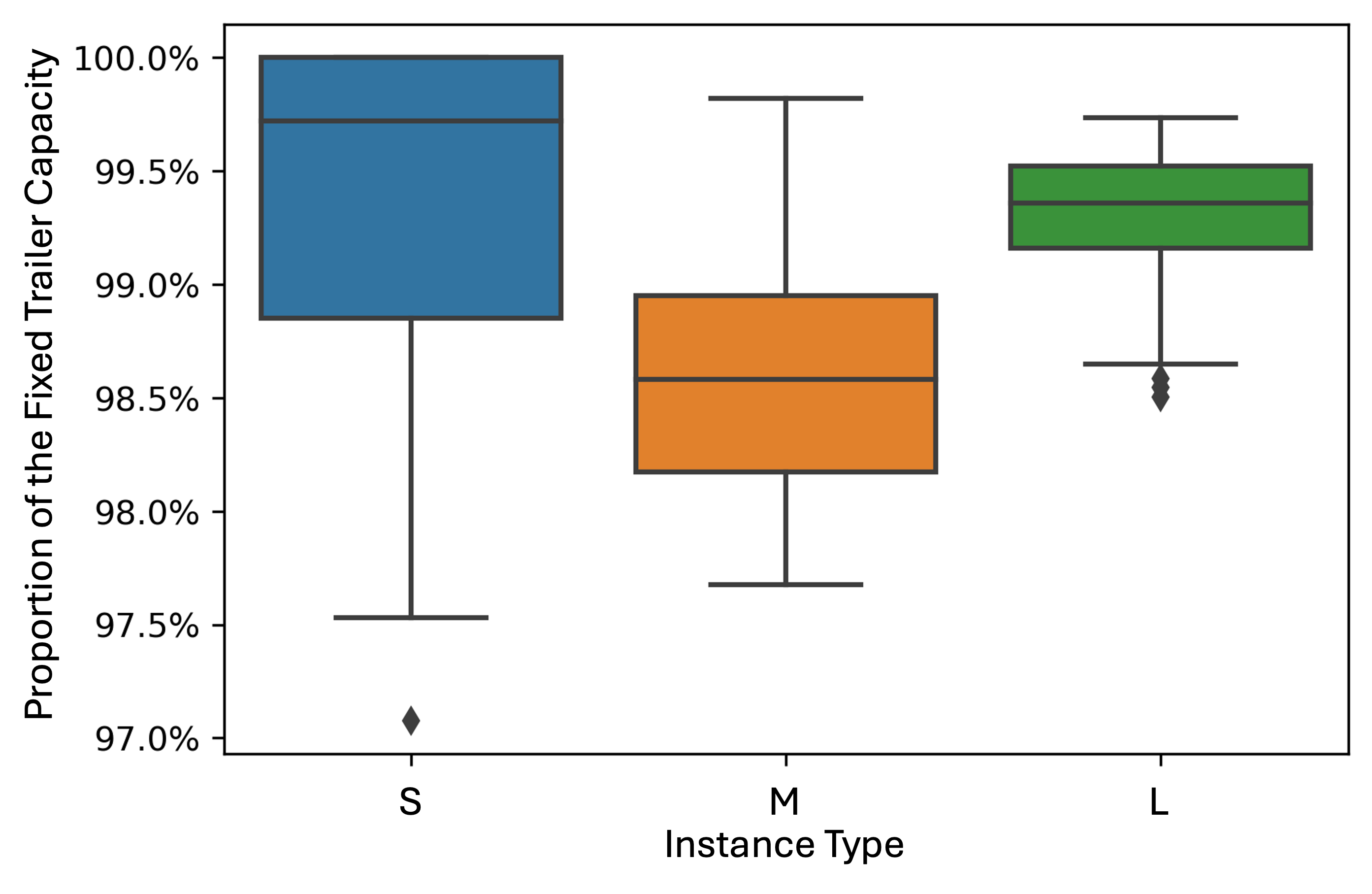}
    \caption{The Proportion of the Total Trailer Capacity in the Final Solution Fixed by the Proxies.}
    \label{fig:load_proportion}
\end{figure}

One of the key benefits of the optimization proxy is that it replaces
a model with a large number of integer decision variables with a
prediction model and a relatively simpler MIP model with a smaller
number of integer variables. Table \ref{stats_mipmodel} compares the
number of integer decision variables in the OLPP (Model
\ref{CreateLoadsModel}) and the average number of integer decision
variables in the repair procedure (Model \ref{feasibility_mip}) across
$1000$ test instances.

\begin{table}[!t]
\centering
\begin{tabular}{c|c|c}
\toprule
Instance& OLPP (Model \ref{CreateLoadsModel})  &  Repair Procedure (Model \ref{feasibility_mip})  \\
\midrule
S & 187 & 1.3  \\
M & 846 & 19.5  \\
L & 4,475 & 35.5  \\
\bottomrule
\end{tabular}
\caption{Comparing the number of integer decision variables in the OLPP, i.e., Model \ref{CreateLoadsModel}, with the average number of integer decision variables in the Repair Procedure, i.e., Model \ref{feasibility_mip}.}
\label{stats_mipmodel}%
\end{table}%

The number of integer decision variables remains the same for each
test instance in an instance category because it depends on the number
of outbound terminals and trailer types; only the commodity volume
changes across different test instances. However, the size of the MIP
Model (\ref{feasibility_mip}) in the repair procedure depends on the
predictions of the machine learning model; the number of integer
decision variables in the model is equal to the number of outbound
terminals with a capacity violation, which is determined by Model
(\ref{feasibility_recov_lp}). As the predictions can vary for
different test instances with the same set of outbound terminals due
to different commodity volumes, the number of integer variables in
Model (\ref{feasibility_mip}) can be different for different test
instances; Table \ref{stats_mipmodel} reports the average
number of integer variables in Model \ref{feasibility_mip} for the
repair procedure over all test instances. The main observation here is that the repair
procedure is efficient because it has a small number of integer
decision variables, on average, due to the accurate predictions from
the machine learning model.
% It is worth highlighting that one of the key benefits of the optimization proxies is that it replaces a model with a large number of integer decision variables with a prediction model, and requires solving a simpler model with a significantly smaller number of binary variables.

%The feasibility restoration is a reduced MILP, where the \#Binary-Vars depends on the ML predictions. By design, better ML predictions result in feasibility restoration MIPs with smaller number of binary decision variables. Indeed, the reduced MIP has a two-order of magnitude smaller number of integer (binary) decisions than the Model \ref{CreateLoadsModel}.

\section{The Benefits of Optimization and Learning}
\label{insights-and-discussions}

%This section discusses the benefits of load plan modification,
%optimization, and learning. The load planning methodology studied in
%this paper is based on the concepts of primary and alternate terminals
%for commodities processed during one or more sorts at a terminal. With
%the availability of optimization models, it is possible to evaluate
%the benefits of the modeling and solution approach for load
%consolidation, at least from a local perspective.

The results in the paper also make it possible to evaluate the
benefits of optimization compared to human planners. In practice,
planners assign commodities to their primary terminals, then a first
alternate terminal (based on their experience), then a second
alternate terminal, and so on; planners may also prioritize alternate
terminals closest to their terminal. This is a \textit{greedy}
approach that is myopic in nature. A comparison between such a greedy
approach and the optimization models helps assess the value of
optimization. Of course, the optimization models are too slow to be
used with planners in the loop. {\em The optimization proxies proposed
  in the paper are the technology enablers for translating the
  benefits of optimization into practice.}

% Figure \ref{Altfreq} presents some characteristics of the networks studied in this paper: it shows the distribution of the number of commodities with a specific number of alternate terminals for each instance. It highlights that the network has some significant flexibility (alternate routing options, in addition to the primary options) for many of its commodities.

Figure \ref{prim_base_1alt_allalt} presents the benefits of the load
planning methodology. It compares the variation in total trailer
capacity required to contain the total commodity volume (blue curve)
and the total volume allocated to alternate terminals (green curve)
across four different load plans: \textit{Primary Only},
\textit{Reference Plan}, \textit{1-Alt} and \textit{All-Alt} for the
three instance categories. In the \textit{Primary Only} plan, each
commodity can be assigned only to its primary terminal. The
\textit{Reference Plan}, referred to as the \textit{P-Plan}, is the
reference load plan from our industry partner. The \textit{P-Plan} is
produced by the planners using the greedy approach described earlier
in this section, and each commodity in the plan can use any number of
compatible alternate terminals. In the \textit{1-Alt} plan, each
commodity can be assigned to either its primary terminal and/or the
cheapest alternate terminal; here, cheapest implies that the alternate
terminal $a$ has minimum flow diversion cost ($d^k_a$) for the
commodity $k$. In the \textit{All-Alt} plan, each commodity can be
assigned to the primary and all the available alternate
terminals. Observe that the curves in Figure
\ref{prim_base_1alt_allalt} are on different scales: the left y-axis
is for the blue curve, and the right y-axis is for the green curve.

\begin{figure}[!t]
    \begin{subfigure}[t]{0.50\textwidth}
        \includegraphics[width=\textwidth]{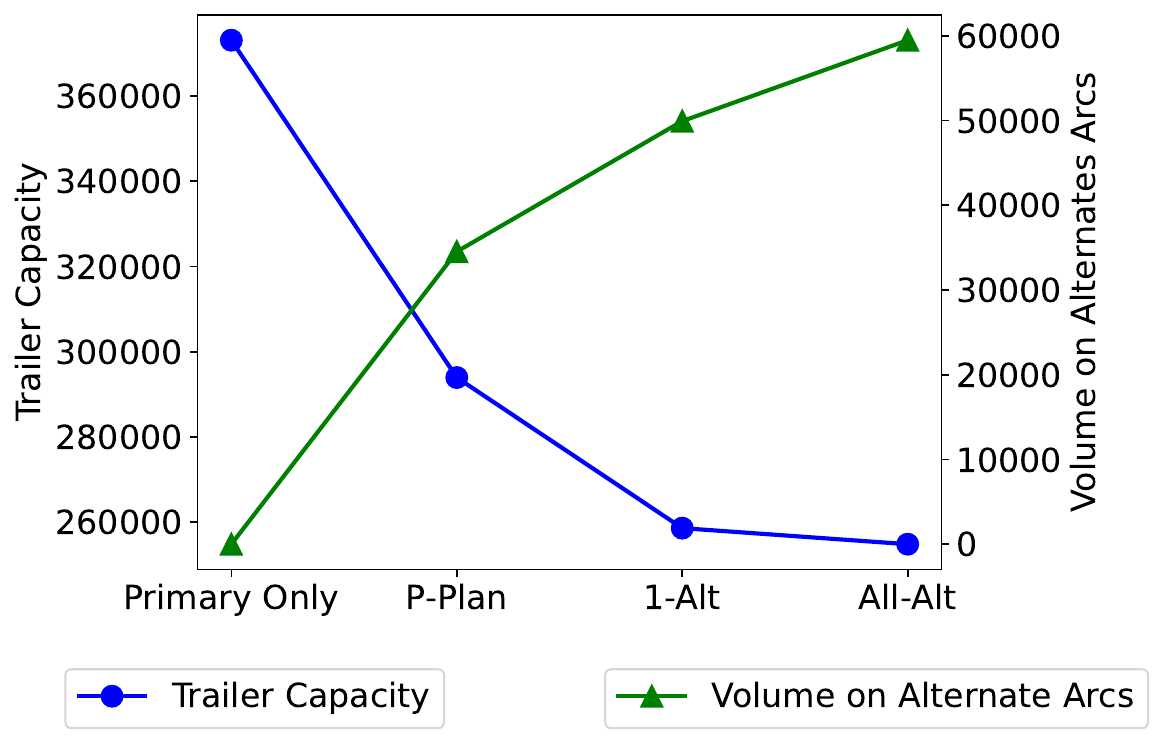}
        \caption{Instance S} \label{M1}
     \end{subfigure}%
     \hfill
    \begin{subfigure}[t]{0.50\textwidth}
        \includegraphics[width=\textwidth]{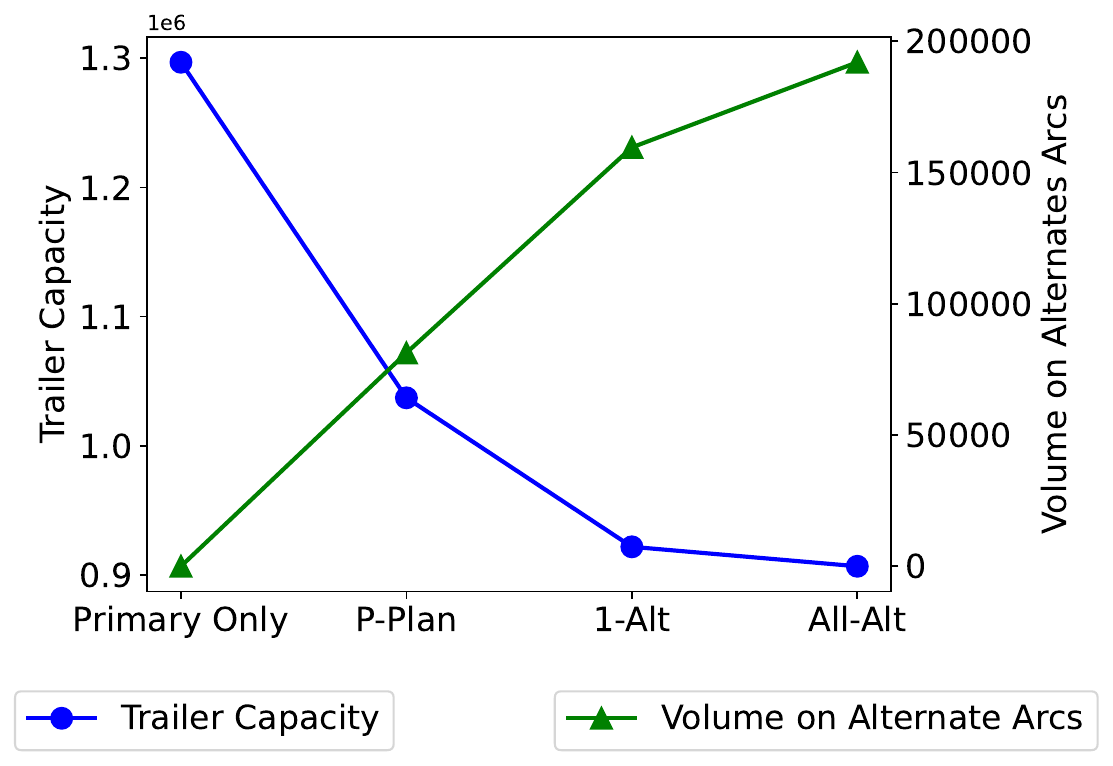}
        \caption{Instance M} \label{L2}
    \end{subfigure}%
    \hfill
    \begin{subfigure}[t]{0.48\textwidth}
        \hspace{4cm}
        \includegraphics[width=\textwidth]{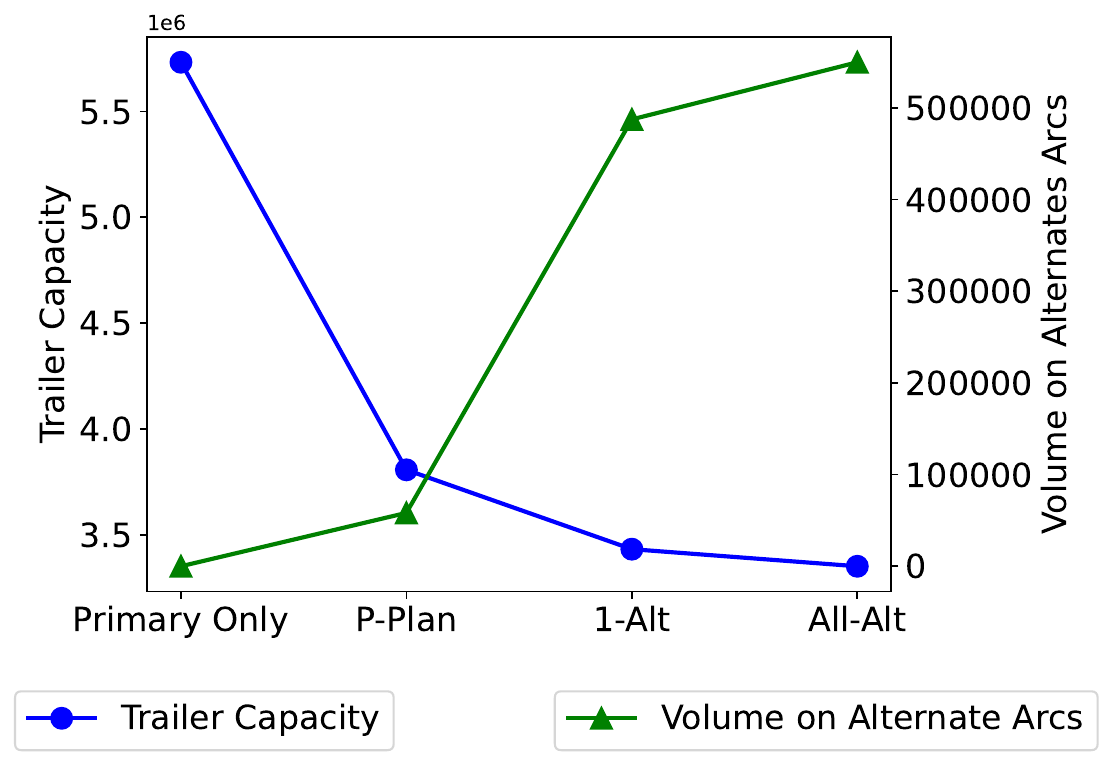}
        \caption{Instance L} \label{L}
    \end{subfigure}%
    \caption{Variation in total trailer capacity and total volume allocated to alternate terminals in four load plans.} \label{prim_base_1alt_allalt}
\end{figure}
% \ro{RO: might have to change the x and y-axis data labels to avoid data confidentiality issues} \pvh{We need to change reference plan into P-Plan, since reference plan is hyper confusing}

Figure \ref{prim_base_1alt_allalt} demonstrates a consistent trend in
capacity required in the four different load plans: the capacity
monotonically decreases, and the decreases are significant. Allowing
to split the commodity volume across the primary and alternate
terminals improves trailer consolidation. These benefits are already
apparent in the \textit{P-Plan} of the planners, despite the fact that
the \textit{P-Plan} follows a greedy approach. The optimization model
corresponding to the \textit{1-Alt} plan, brings another step change
in consolidation, highlighting the benefits of optimization. This
benefit stems from the fact that a large number of commodities have at
least one alternate terminal in all instances (see Figure
\ref{Altfreq}, which highlights the percentage of commodities with a
specific number of alternate terminals for the three instances). Note
also that the \textit{1-Alt} load plan requires a significantly
smaller total trailer capacity than the \textit{P-Plan}, although the
\textit{P-Plan} has the flexibility of using any number of alternate
terminals; this is because in the \textit{P-Plan} planners use a
greedy approach to allocate commodity volume to compatible
terminals. The \textit{All-Alt} plan brings further benefits, but they
are rather incremental. Part of the reason comes from the fact that a
relatively small fraction of the commodities have more than one
alternate terminal. It would be interesting to study a network with
more flexibility as this may bring further load consolidation
benefits.

\begin{figure}[!t] 
    \centering
    \includegraphics[width=9cm, height=6cm]{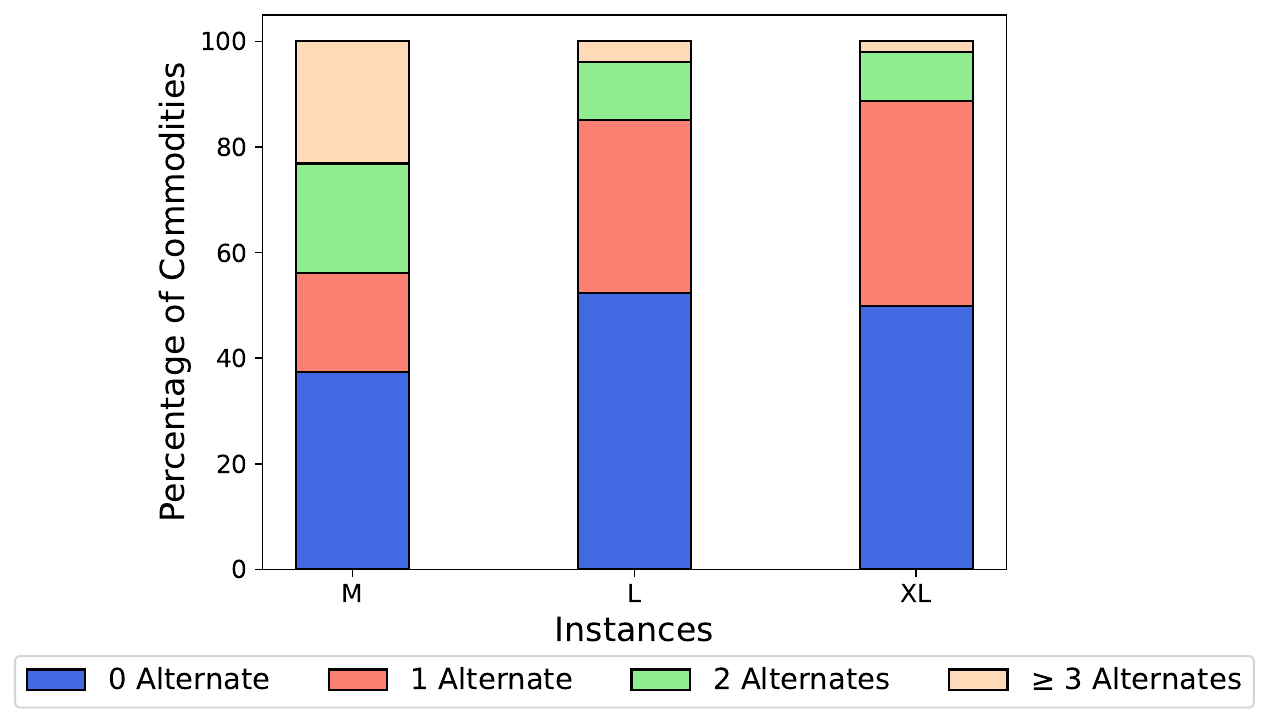}
    \caption{(color online) Instance Statistics About the Range of Alternate Terminals For Commodities.}
    \label{Altfreq}
\end{figure}

Figure \ref{vol_divert_compare} compares the percentage of the total
commodity volume that is assigned to the alternate terminals in the
\textit{P-Plan} and the \textit{All-Alt} plan. It could be undesirable to
allocate a major proportion of the volume to the alternate terminals
because the downstream buildings may not be better equipped to handle
or process the large inbound volume. Observe that, on average across
all the instances, the \textit{All-Alt} plan (resp. \textit{P-Plan})
allocates around $17\%$ (resp. $9\%$) commodity volume on the
alternate terminals. The \textit{All-Alt} plan reduces the total
trailer capacity by roughly $12\%-15\%$ relative to the
\textit{P-Plan}. For the L instance, there is a significant gap
between the \textit{P-Plan} and \textit{All-Alt} plan statistics
because most of the commodities in the \textit{P-Plan} are allocated
to the primary terminals. This is why the total commodity volume
allocated to the alternate terminals in the \textit{P-Plan} and
the \textit{Primary Only} have a small difference; see Figure
\ref{L} for the L category.

\begin{figure}[!t] 
    \centering
    \includegraphics[width=9cm, height=7cm]{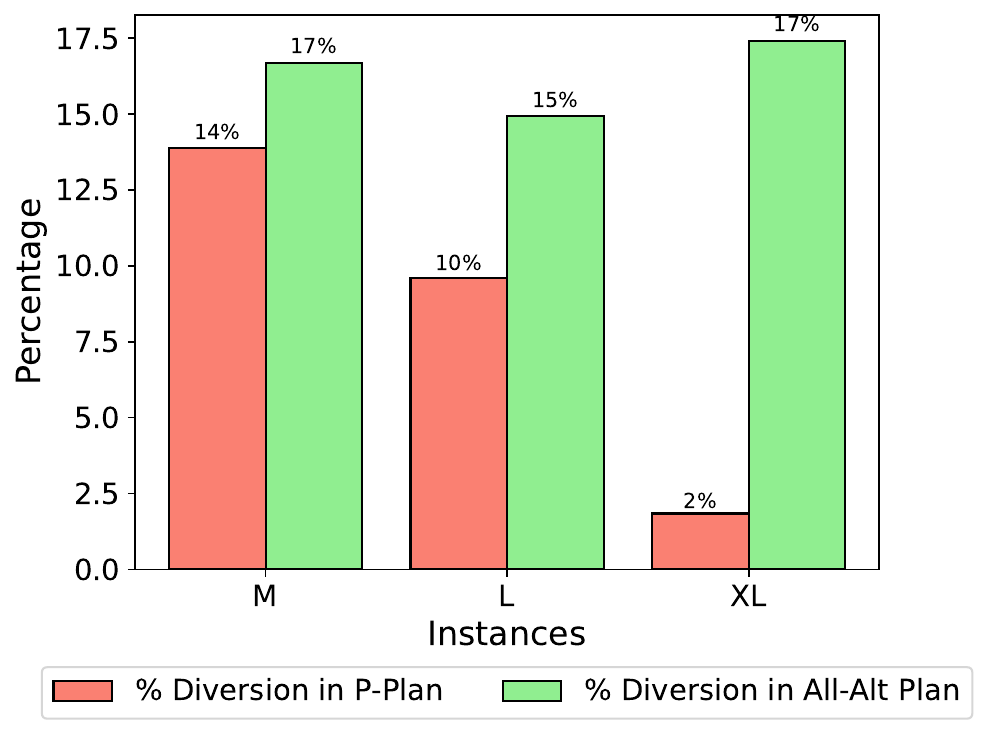}
    \caption{(color online) Proportion of total commodity volume allocated to alternate terminals in the \textit{P-Plan} and \textit{All-Alt Plan}}\label{vol_divert_compare}
\end{figure}

%In conclusion, the LOLPP is able to effectively consolidate commodity volume by allocating without major deviation from the trailer decisions in the reference plan.
% \input{consistent_lp_results}

These results show that optimization proxies can bring substantial
benefits in practice. They provide, in real-time, significant
improvements over the existing planning process. Moreover, by virtue
of their training, they approximate the LOLPP optimization and make
sure that plans evolve smoothly during the planning process: small
changes in inputs will not result in large changes in the proposed
solutions.

{\em These results are eminently practical}. One of the challenges in
the operational setting is the need for additional trailers when the
total commodity volume increases. Based on the planner's
recommendation, the scheduling team can acquire these trailers either
through empty trailer repositioning or by engaging in short-term
trailer leasing with local companies.  Conversely, if the commodity
volume decreases, planners are left with a plan that has low trailer
utilization. The optimization proxies address this issue
directly. Planners can also use the proposed optimization proxies to
obtain recommendations for load plan adjustment within a matter of
seconds in the event of a disruption (due to uncertainty in commodity
volume), even for the largest terminal. Furthermore, the
recommendations from the optimization proxies are consistent with
existing load plans, which makes it easy for the planners to evaluate
and implement the suggestions. Finally, new terminals in the service
network often do not have dedicated planners to develop and/or update
existing load plans when commodity volume changes, and extra capacity
is built in the system to handle the commodity volume in the
worst-case scenario. Optimization proxies can be used as a
decision-support tool at such terminals.

%The objective function in Model \ref{Stage2Model} demonstrates that biasing the trailer decisions towards the reference load plan and penalizing diversions from the primary terminal serves multiple purposes. Firstly, it reduces the symmetry in the model formulation. Secondly, it acts as a proxy for considering the downstream impact resulting from load plan modifications at the terminal. This is because the primary terminals are generally better equipped to accommodate changes in commodity volume. Therefore, by optimizing trailer capacity in the load plans while adhering to the reference load plan, enhances the feasibility and practicality of implementing the recommendations.

\section{Conclusions and Future Work}\label{sec:conclusion and future}

This paper studies the Outbound Load Planning Problem (OLPP), which
produces a load and a volume allocation plan for the commodities
processed at an individual terminal in a parcel service network. The
OLPP is intended to be solved by planners regularly in the two weeks
preceeding the day of operations. The paper was motivated by the need
of an industrial partner for a decision-support tool to advise
planners in achieving better commodity volume consolidation and
decreasing trailer costs.

The paper proposed a MIP model for the OLPP: experimental results
however revealed that the solutions of the OLPP, when solved by
commercial solvers, exhibit significant variations when commodity
volume varies. This makes the model impractical for industrial use,
i.e., with planners in the loop. To remedy this drawback, the paper
proposed the LOLPP, a lexicographic optimization model that breaks the
symmetries of the OLPP by minimizing the distance to a reference plan
and the costs of diverting flows from a primary outbound terminal
(i.e., the desired terminal chosen by default at the network level).

Although the LOLPP eliminates the instability of the OLPP and produces
consistent plans that can be used by planners, it cannot be used in
practice, as it does not meet the real-time constraints imposed by the
planner interactions. To address this computational challenge, the
paper proposed the use of optimization proxies that approximate the
LOLPP through a combination of a machine learning model and a repair
procedure that restores the feasibility of the machine learning
predictions. The paper presented the machine learning model which only
predicts the number of trailers for each outbound terminal, and the
repair procedure that adjusts the number of trailers and computes the
allocation of commodity volume to the outbound terminals. The
resulting optimization proxies were able to deliver high-quality
solutions in a fraction of the time taken by the LOLPP. An extensive
computational study on industrial instances shows that the
optimization proxy is around $10$ times faster than the commercial
solver in obtaining the same quality solutions and orders of magnitude
faster for generating solutions that are consistent with each
other. The proposed approach also highlights the benefits of the LOLPP
for load consolidation, as well as the significant savings that result
from the combination of machine learning and optimization.

% \textcolor{blue}{Day-differentiated optimization proxies for each terminal in the service network.}

This research is the first stage of a multi-stage project with our
industry partner.  Future research directions include extending the
proposed approach to clusters of terminals, taking into account their
capacities for processing commodities while tapping into the
computational efficiency of the current terminal-specific optimization
proxies. The resulting problem thus requires determining both inbound
and outbound planning decisions at each terminal, which significantly
complicates the optimization and learning models.

\section*{Acknowledgement}
This research was partly supported by the NSF AI Institute for Advances in Optimization (Award 2112533).

%======================================================================================================================%
%==============================================References==============================================================%
%======================================================================================================================%

{{\printbibliography}}

%======================================================================================================================%
%==============================================Appendix================================================================%
%======================================================================================================================%

\newpage

\section{Appendix}
\subsection{Example for Compatible Primary and Alternate Terminals for Commodities from a Terminal with Multiple Sorts}\label{load_pair_idea_arcs} 

In this section, we present an example to illustrate the construction of set $A^k$ for a commodity $k \in K$ when OLPP considers multiple sorts at a terminal on a specific day. We define $A^k$ to denote the set of primary and 
 alternate arcs $(o_a, \hat{d}_a)$ such that commodity $k$ is processed at terminal-sort pair $o_a$ and loaded into a trailer planned for outbound terminal-sort pair $\hat{d}_a$. Figure \ref{fig: load_pair_new1} shows an example where there are two sorts at terminal A on day $1$: sort $1$ and sort $2$. Suppose the nodes of terminals B, D, and F denote a specific sort $\hat{s}$ at the respective terminals. One unit of a commodity destined for terminal D on day $4$ is processed at terminal A in sort $2$; the commodity has a primary arc to terminal F on day $3$ and an alternate arc to terminal B on day $3$. A commodity with four units is processed at terminal A in sort $1$ and has a primary arc to its ultimate destination terminal B on day $3$. Note that if we restrict the volume of a commodity to be assigned only to its primary and alternate arcs, two trailers are required. However, in practice, as shown in Figure \ref{fig: load_pair_new2}, a trailer is opened (for loading) at terminal A in sort $1$, and the four units of commodity B are loaded into the trailer. As the trailer has available space, it is kept open till sort $2$, where one unit of commodity D is loaded into the trailer. Finally, the trailer transports the commodities to terminal B on day $3$; as a result, only one trailer is required because both the commodities are consolidated into the same trailer by keeping the trailer waiting from sort $1$ to sort $2$. It is important to note that consolidation is possible because the departures from sort $1$ and sort $2$ from terminal A are planned to arrive at the same destination node at terminal B. To allow such consolidations in the OLPP model, we define $A^k$ for commodity $k = D$ to include the arc $(A, sort 1, Day 1) \rightarrow(B, \hat{s}, Day 3)$. More formally, for any commodity $k \in K$ with a primary and alternate arc set $A^k$ we include arcs $a' = (o_{a'}, \hat{d}_{a'})$ in $A^k$ if and only if there exists an arc $(o_a, \hat{d}_a) \in A^k$ such that the sort of node $o_{a'}$ is earlier (in time) than sort of node $o_{a}$ and $\hat{d}_a = \hat{d}_{a'}$. 
\begin{figure}[!ht]
    \begin{subfigure}[t]{0.48\textwidth}
        \includegraphics[width=\textwidth]{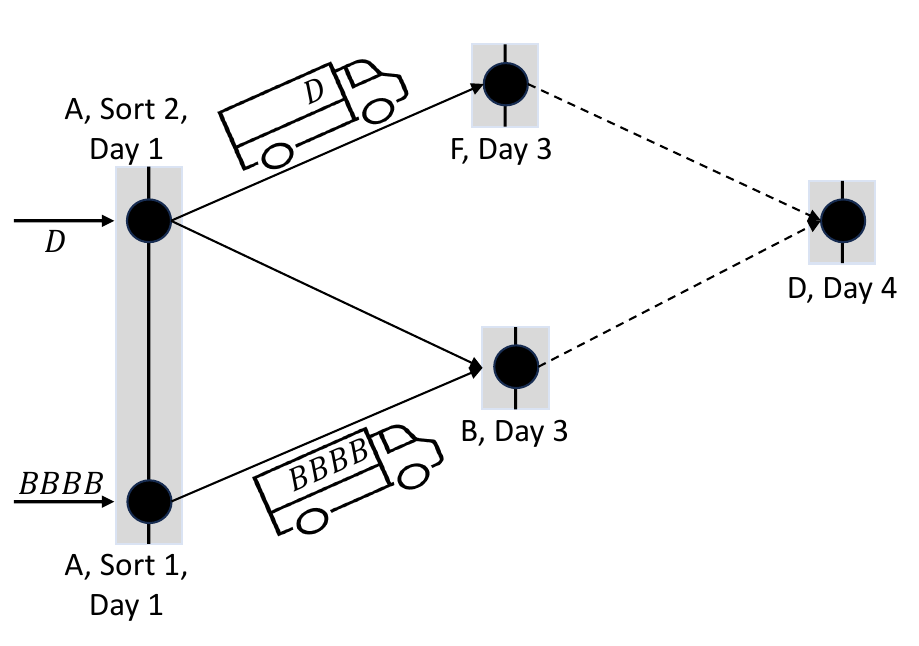}
        \caption{base case}\label{fig: load_pair_new1}
     \end{subfigure}%
     \hfill
    \begin{subfigure}[t]{0.48\textwidth}
        \includegraphics[width=\textwidth]{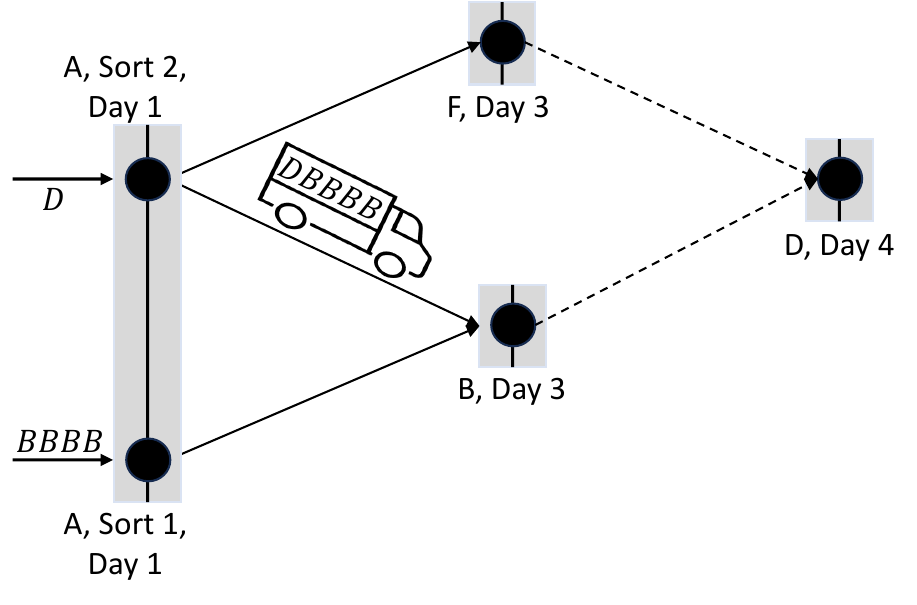}
        \caption{better consolidation}\label{fig: load_pair_new2}
     \end{subfigure}%
     \hfill
    \caption{Example figure to illustrate the additional consolidation opportunities arising from delaying a trailer from one sort to another sort at the same terminal}\label{fig: load pair idea}
\end{figure}

\subsection{Complexity Results}
\label{appendix:complexity}

Model \ref{CreateLoadsModel} is difficult to solve because in addition to determining the right combination of trailer types to contain volume assigned to each outbound terminal, we need to determine the right splits of commodity volume for the given set of outbound terminals. We will analyze the complexity of Model \ref{CreateLoadsModel} using the special cases described below.

\textit{\textbf{Case 1:}} There is only one trailer type available at the terminal, i.e., $\vert V_a \vert = 1 \ \forall a \in A$. Each commodity $k \in K$ is compatible with exactly one outbound terminal $a_k$, i.e., $A^k = \{a_k\}\ \forall k \in K$

\textit{\textbf{Case 2:}} There is only one trailer type available at the terminal, i.e., $\vert V_a \vert = 1 \ \forall a \in A$. Each commodity $k \in K$ is compatible with all outbound terminals, i.e., $A^k = A\ \forall k \in K$

\begin{proposition}
    Cases $1$ and $2$ are polynomial time solvable
\end{proposition}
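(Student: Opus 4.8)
The plan is to treat the two cases separately and, in each, reduce Model~\ref{CreateLoadsModel} to a closed‑form computation by exploiting the special structure. The two structural observations driving everything are: in Case~1 each commodity's volume is forced onto its unique compatible sort pair, so the sort pairs decouple completely; and in Case~2 splittability together with full compatibility ($S^k=S$) means that only the \emph{aggregate} trailer capacity installed at the terminal matters for feasibility. For Case~1, write $s_k$ for the unique element of $S^k$ and $v_s$ for the unique element of $V_s$. Constraints~\eqref{FlowAssignment} force $\sum_{v\in V_{s_k}} x^k_{s_k,v}=q^k$, so the volume routed on sort pair $s$ equals the constant $Q_s:=\sum_{k:\,s_k=s} q^k$. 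Constraint~\eqref{LoadCapacity} then reduces to $Q_s\le Q_{v_s}\,y_{s,v_s}$, so every feasible solution needs $y_{s,v_s}\ge\lceil Q_s/Q_{v_s}\rceil$, independently on each $s$; conversely, installing exactly $\lceil Q_s/Q_{v_s}\rceil$ trailers and spreading $Q_s$ arbitrarily across them is feasible. Hence the optimum is the closed form $\sum_{s\in S} c_{v_s}\lceil Q_s/Q_{v_s}\rceil$, computable in $O(|K|+|S|)$ time, which proves both polynomiality and optimality of the construction.

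For Case~2 the plan is to show Model~\ref{CreateLoadsModel} is equivalent to a trivial aggregate problem. Summing~\eqref{LoadCapacity} over all $s$ and $v$ and substituting~\eqref{FlowAssignment} gives that every feasible solution satisfies $\sum_{s\in S} Q_{v_s}\,y_{s,v_s}\ \ge\ \sum_{k\in K} q^k=:\bar q$. Conversely, since every commodity is compatible with every sort pair and all volumes are splittable, any nonnegative integer vector $y$ with $\sum_{s} Q_{v_s}\,y_{s,v_s}\ge\bar q$ extends to a feasible $x$ by a greedy ``pour'': process commodities in any order, filling the open trailers of one sort pair before moving to the next; since total capacity is at least $\bar q$, all volume is placed. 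Therefore the DLPP optimum equals $\min\{\sum_s c_{v_s} y_{s,v_s}\,:\,\sum_s Q_{v_s} y_{s,v_s}\ge\bar q,\ y\in\mathbb{Z}_{\ge 0}\}$. Under the reading that the single trailer type is common to all sort pairs (capacity $Q$, cost $c$), this is simply $c\lceil\bar q/Q\rceil$, attained by placing $\lceil\bar q/Q\rceil$ trailers on any one sort pair, so the case is polynomial.

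The step I expect to be most delicate is the converse direction in Case~2 — certifying that any installed aggregate capacity at least $\bar q$ is actually realizable as a genuine feasible flow $x$ — which is intuitively obvious but requires writing out the greedy (or network‑flow / transportation) argument and pinpointing that it is exactly $S^k=S$ that makes it go through; one should also verify nothing here uses the experimental convention $c_v=Q_v$, and it does not. A secondary subtlety is the precise meaning of ``only one trailer type available at the terminal'': if one allows the unique type to differ per sort pair, the final step becomes a one‑row integer covering program with positive data, and one must additionally observe that it is polynomially solvable (e.g.\ an optimal solution overshoots $\bar q$ by strictly less than $\max_s Q_{v_s}$ and can be taken to have small support, so the relevant configurations can be enumerated); I would state the result under the common‑type reading as the main line of argument and relegate the per‑sort‑pair variant to a remark.
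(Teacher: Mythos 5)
Your main argument is correct and is essentially the paper's proof: in Case 1 the flow is forced onto the unique compatible sort pair and each sort pair independently needs $\lceil Q_s/Q\rceil$ trailers; in Case 2 full compatibility lets you pour all volume onto a single sort pair and install $\lceil \bar q/Q\rceil$ trailers there. You are somewhat more careful than the paper in Case 2 (explicitly verifying that any aggregate capacity $\ge \bar q$ is realizable as a feasible $x$), which is a harmless strengthening.

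One caveat on your closing remark: the ``per-sort-pair trailer type'' variant of Case 2 (one type per sort pair, but different types across sort pairs) is \emph{not} polynomially solvable by a small-support enumeration. The resulting one-row integer covering program $\min\{\sum_s c_{v_s} y_s : \sum_s Q_{v_s} y_s \ge \bar q,\ y \in \mathbb{Z}_{\ge 0}\}$ is exactly the minimum-knapsack problem that the paper invokes in the proofs for Cases 3 and 4 to establish weak NP-hardness, so your hedged claim of polynomiality there is wrong. Since you correctly restrict the main line to the common-type reading (which is what the paper's own proof uses, writing a single capacity $Q$), this does not affect the validity of the proposition, but the remark should be dropped or corrected.
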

\begin{proof}
In \textit{Case} $1$, the volume of each commodity $k$ is assigned to its only outbound terminal, $a_k$, i.e. $x^k_{a_k} = q^k$. Then, the optimal solution has $y_a = \left \lceil{\frac{\sum_{k \in K: a \in A^k} x^k_{a}}{Q}}\right \rceil = \left \lceil{\frac{\sum_{k \in K: a \in A^k} q^k}{Q}}\right \rceil \ \forall \ a \in A$.

In \textit{Case} $2$, the optimal solution is to assign the volume of all commodities on any arc $a \in A$ and set $x^k_a = q^k \ \forall k \in K$, $y_a = \left \lceil{\frac{\sum_{k \in K} q^k}{Q}}\right \rceil, y_{a'} = 0 \ \forall \ a' \in A, a' \neq a$.
\end{proof}

\textit{\textbf{Case 3:}} Same as \textit{Case 1}, but with more than one trailer type available at the terminal

\textit{\textbf{Case 4:}} Same as \textit{Case 2}, but with more than one trailer type available at the terminal

\begin{proposition}
    Cases $3$ and $4$ are weakly NP-Hard
\end{proposition}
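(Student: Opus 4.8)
The plan is to establish two facts: (i) the restrictions of Model~\ref{CreateLoadsModel} under Case~3 and Case~4 admit a pseudo-polynomial algorithm, and (ii) both are NP-hard. Together these give weak NP-hardness. For (i), I would first handle Case~3: since each commodity $k$ is compatible with a single sort pair $s_k$, its entire volume must go to $s_k$, so the instance separates into $|S|$ independent subproblems, one per sort pair $s$ with aggregate demand $d_s := \sum_{k:\, s_k = s} q^k$, namely minimize $\sum_{v \in V_s} c_v y_{s,v}$ subject to $\sum_{v \in V_s} Q_v y_{s,v} \ge d_s$ and $y_{s,v} \in \mathbb{Z}_{\ge 0}$. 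This is a single-constraint integer covering knapsack with unbounded multiplicities, solved by the recursion $g_s(0) = 0$, $g_s(j) = \min_{v \in V_s}\{\, c_v + g_s(\max(0, j - Q_v)) \,\}$ in $O(d_s |V_s|)$ time, hence pseudo-polynomial. For Case~4, commodity volumes may be spread arbitrarily across sort pairs, so only the total $q := \sum_k q^k$ and the induced per-sort-pair demands matter; I would add an outer pseudo-polynomial DP that partitions $q$ among the sort pairs and pays $\sum_s g_s(d_s)$, giving overall time $O(|S|\, q^2 + |S|\, q\, \max_s |V_s|)$. Membership of the decision version in NP is routine, since an optimal $y$ has polynomially many bits and a matching $x$ is recoverable greedily from $y$.

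For (ii), I would reduce from SUBSET-SUM (equivalently, from the money-changing / integer-representability problem, which is weakly NP-complete). Given integers $a_1, \dots, a_n$ and target $T$, build a DLPP instance with a single sort pair $s$, trailer types $v_1, \dots, v_n$ with $Q_{v_i} = c_{v_i}$ set to (a positional re-encoding of) the $a_i$, and a single commodity $k$ with $S^k = \{s\}$ and volume $q^k$ equal to the re-encoded target. Because $Q_v = c_v$ here, every feasible solution has cost $\sum_v Q_v y_{s,v} \ge q^k$, with equality precisely when $q^k$ is a non-negative integer combination of the $Q_v$; choosing the encoding so that such a combination exists if and only if the SUBSET-SUM instance is a YES-instance makes ``$Z^\ast = q^k$'' a decision oracle for SUBSET-SUM. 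The reduction is polynomial, and—crucially—the instance it produces has a single sort pair, so it simultaneously meets the structural hypothesis of Case~3 ($S^k = \{s_k\}$) and of Case~4 ($S^k = S$, as $|S| = 1$); hence both cases inherit NP-hardness, and combined with (i) both are weakly NP-hard. The same one-sort-pair instance also shows the two propositions can be proved by a single construction.

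The step I expect to be the main obstacle is the encoding in (ii): unlike SUBSET-SUM, a trailer type can be used with arbitrary multiplicity, so the naive choice $Q_{v_i} = a_i$, $q^k = T$ would only certify representability \emph{with repetition}, which is strictly weaker than the existence of a genuine subset (e.g.\ $\{3,3,3,1\}$ cannot be partitioned to hit $5$, yet $3+1+1=5$). The fix is the standard positional gadget: work in base $B := 1 + \sum_i a_i$, replace $a_i$ by $a_i + B^i$, add ``marker'' trailer types of capacity $B^i$, and set the demand to $T + \sum_i B^i$, so that the high-order digits force exactly one trailer per index $i$ while the order-$0$ digit reads off a subset summing to $T$; the delicate part is verifying that no carries occur across digits and that this digit bookkeeping is exact, after which optimal cost equals the demand if and only if SUBSET-SUM is satisfiable. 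Alternatively, one may simply cite the weak NP-completeness of deciding whether a given integer is a non-negative integer combination of given integers and dispense with the gadget, reducing to it directly via $Q_{v_i} = c_{v_i} = a_i$ and $q^k = T$.
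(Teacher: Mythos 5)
Your proposal is correct, but it takes a more self-contained route than the paper. The paper's proof is essentially a one-liner in each case: after observing that the volume assignment is forced (Case~3) or can be concentrated (Case~4), it identifies the residual problem on each sort pair as the minimum (covering) knapsack problem and simply cites its known weak NP-hardness. You instead prove both halves of ``weakly NP-hard'' explicitly: the pseudo-polynomial covering-knapsack recursion $g_s(j)=\min_v\{c_v+g_s(\max(0,j-Q_v))\}$ for the upper bound, and a reduction from SUBSET-SUM (with the base-$B$ positional gadget to neutralize unbounded multiplicities, or equivalently a direct reduction from unbounded subset-sum via $Q_{v_i}=c_{v_i}=a_i$, $q^k=T$, exploiting that $Z^\ast=q^k$ iff $q^k$ is exactly representable) for the lower bound. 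Your single-sort-pair construction serving both cases simultaneously mirrors the paper's logic, since Case~4's hardness in the paper also rests on a one-sort-pair knapsack. Two further points of comparison: (a) your explicit identification of the repetition issue, and the gadget fixing it, is a genuine detail the paper outsources to the cited knapsack literature; (b) your outer partitioning DP for Case~4's pseudo-polynomial solvability is actually more robust than the paper's claim that some optimal Case-4 solution puts all volume on a single sort pair --- that claim can fail when different sort pairs offer different trailer-type sets (splitting across two sort pairs with complementary trailer sizes can be strictly cheaper), whereas your DP (or, even more simply, a single covering knapsack over the pooled item set $\bigcup_s V_s$) handles the general case. The paper's approach buys brevity by citation; yours buys a complete, verifiable argument at the cost of the digit-bookkeeping details you correctly flag as the delicate step.
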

\begin{proof}
In the optimal solution in \textit{Case} $3$, the volume of each commodity $k$ is assigned to its only outbound terminal $a_k$. Thus, it remains to decide the optimal combination of trailer types required to containerize the volume assigned to every outbound terminal. This is the minimum knapsack problem (see \cite{csirik1991heuristics} for the problem definition) for each outbound terminal (that has more than one trailer type) as shown in \ref{mkp}, which is known to be weakly NP-Hard.
\begin{subequations}\label{mkp}
    \begin{align}
    \text{For every $a \in A$: }\underset{y}{\text{Minimize}} \ \quad &   \sum_{v \in V_a} c_{v} y_{a,v} \label{mkp_obj}\\
    \text{subject to} 
         \ \quad & \sum_{k \in K: a \in A^k}  q^k\leq \sum_{v \in V_a}Q_v (y_{a,v}), & \quad  \label{mkp_LoadCapacity}\\
         \ \quad & y_{a,v} \in \mathbb{Z}_{\geq 0} & \quad \forall v \in V_a  \label{mkp_Nonneg2}
    \end{align}
\end{subequations}

Similarly, for \textit{Case} $4$, there exists an optimal solution in which the volume of all commodities is assigned to one outbound terminal $a^* \in A$, i.e., $x^k_{a^*} = q^k \ \forall k \in K$ and it remains to solve a minimum knapsack problem for the outbound terminal $a^*$ due to which \textit{Case} 4 is weakly NP-Hard.
\end{proof}

\textit{\textbf{Case 5:}} Each commodity $k \in K$ is compatible with a subset of outbound terminals, i.e., $A^k \subset A$, and has unit volume, $q^k = 1$. There is only one trailer type with per-unit cost $c_{a}=1 \ \forall a \in A$ and capacity $Q=\max_{a \in A} \{\sum_{k \in K} \mathbb{1}_{a \in A^k} \}$; hence, $y_a \in \{0,1\} \forall a \in A$, as installing one unit of trailer is sufficient to contain the total volume that can be assigned to the outbound terminal. Note that we ignore the index $v$ for the trailer because each outbound terminal has exactly one and the same trailer type.

\begin{claim}\label{claim_case5}
In the optimal solution of \textit{Case} 5, each commodity is assigned to exactly one outbound terminal (i.e. there is no splitting of volume)
\end{claim}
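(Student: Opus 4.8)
The plan is to prove the claim by an exchange argument: starting from any optimal solution of Model~\ref{CreateLoadsModel} specialized to Case~5, I would construct another optimal solution in which no commodity is split, so that the claim holds in the sense that such an optimal solution exists. (Splitting is not forbidden in an arbitrary optimum, so ``the optimal solution'' should be read as ``there is an optimal solution''.)

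First I would normalize the integer variables. Since $c_s = 1$ for all $s$, the objective~\eqref{obj} here is $\sum_{s \in S} y_s$, and the total volume that can ever be routed to a single sort pair $s$ is at most $\sum_{k \in K} \mathbb{1}_{s \in S^k} \le Q$; hence a single trailer on $s$ suffices to containerize any feasible load on $s$, and in an optimal solution we may assume $y_s \in \{0,1\}$ (replacing any $y_s \ge 2$ by $1$ preserves feasibility and strictly lowers the objective). Fix an optimal solution with variables $x^k_s$ and $y_s$, and let $T = \{ s \in S : y_s = 1\}$ be the set of open sort pairs. For $s \notin T$, constraint~\eqref{LoadCapacity} with $y_s = 0$ forces $x^k_s = 0$ for every $k$, so by the demand constraint~\eqref{FlowAssignment} we get $\sum_{s \in S^k \cap T} x^k_s = q^k = 1$; in particular $S^k \cap T \ne \emptyset$ for every commodity $k$.

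Next I would define the un-split solution $\hat{x}$ that keeps the same $y$: for each $k \in K$ pick an arbitrary $s_k \in S^k \cap T$ and set $\hat{x}^k_{s_k} = 1$ and $\hat{x}^k_s = 0$ for $s \ne s_k$. Constraints~\eqref{FlowAssignment} hold by construction, and for each $s$ the induced load $\sum_{k : s_k = s} \hat{x}^k_s$ equals the number of commodities routed to $s$, which is at most $\sum_{k \in K} \mathbb{1}_{s \in S^k} \le Q$ when $s \in T$ and is $0$ otherwise, so~\eqref{LoadCapacity} holds; since $\sum_s y_s$ is unchanged, $(\hat{x}, y)$ is again optimal. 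This solution assigns every commodity to exactly one compatible sort pair, which proves the claim.

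I do not expect a genuine obstacle here; the argument is essentially mechanical. The only point that must be handled carefully is that the capacity $Q$ in Case~5 is chosen to be exactly $\max_{s \in S}\{\sum_{k \in K}\mathbb{1}_{s \in S^k}\}$, which is precisely what guarantees that rerouting all of a commodity's unit volume onto a single open sort pair can never violate constraint~\eqref{LoadCapacity}. It is also worth making explicit the consequence that the subsequent development uses: once splitting is irrelevant, Case~5 reduces to choosing a minimum-cardinality $T \subseteq S$ with $T \cap S^k \ne \emptyset$ for all $k$ (a minimum hitting set / set cover problem), which sets up the NP-hardness reduction that follows.
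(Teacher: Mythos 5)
Your proof is correct, and it takes a genuinely different route from the paper's. The paper argues by contradiction: it supposes a commodity $\hat{k}$ is split between two sort pairs, perturbs the allocation by $\pm\epsilon$ in both directions to obtain two feasible solutions $\underline{x}$ and $\bar{x}$ (feasibility being guaranteed by the choice $Q = \max_{s}\sum_{k}\mathbb{1}_{s\in S^k}$), and observes that the split solution is a strict convex combination of the two --- i.e., an extreme-point argument, though it is phrased, somewhat loosely, as ``contradicts the optimality of the solution'' even though the $y$-variables and hence the objective are unchanged under the perturbation. You instead give a direct construction: normalize to $y_s\in\{0,1\}$, note every commodity has an open compatible sort pair, and reroute each commodity's full unit volume to one such pair, with capacity never binding for exactly the same reason ($Q$ dominates the number of compatible commodities of any sort pair). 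Both arguments pivot on the same choice of $Q$; yours buys two things the paper's does not: it avoids the imprecise ``contradiction'' step (a convex combination of equally-good feasible solutions does not contradict optimality, only extremality), and it makes explicit that the claim must be read existentially (``there is an optimal solution with no splitting''), which is all that the subsequent set-cover reduction actually requires. The paper's version, read charitably as an extreme-point argument, is marginally more general in that it characterizes which optimal solutions can be split, but for the NP-hardness reduction your formulation is the cleaner one.
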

\begin{proof}
    We will present a proof by contradiction. WLOG, suppose there exists an optimal solution in which the volume of a commodity $\hat{k}$ is split between two outbound terminals and the volume of all other commodities $k \in K \backslash \{ \hat{k} \}$  is assigned to exactly one outbound terminal $a_k$. Thus, we have $x^k_{a_k} = q^k \ \forall k \in K \backslash \{ \hat{k} \}$ and $x^{\hat{k}}_{a_1} + x^{\hat{k}}_{a_2} = q^k$. Consider a solution $\underbar{x}^k_a =x^k_a \ \forall k \in K \backslash \{\hat{k}\}$ and $\underbar{x}^{\hat{k}}_{a_1} = x^{\hat{k}}_{a_1} + \epsilon,\underbar{x}^{\hat{k}}_{a_2} = x^{\hat{k}}_{a_2} - \epsilon$, where $\epsilon > 0$ is a small real number. Note that $\underbar{x}^{\hat{k}}_{a_1} + \underbar{x}^{\hat{k}}_{a_2} = q$. Consider another solution $\bar{x}^k_a = x^k_a \ \forall k \in K \backslash \{\hat{k}\}$ and $\bar{x}^{ \hat{k}}_{a_1} = x^{\hat{k}}_{a_1} -\epsilon,\bar{x}^{\hat{k}}_{a_2} = x^{\hat{k}}_{a_2} + \epsilon$. Note that both solutions $\underbar{x}$ and $\bar{x}$ satisfy constraints \eqref{FlowAssignment} and are feasible to constraints \eqref{LoadCapacity} because we choose $Q = \max_{a \in A} \{\sum_{k \in K} \mathbb{1}_{a \in A^k} \}$. The solution $x$ can be written as a convex combination of the solution $\underbar{x}$ and $\bar{x}$ ($x^k_a = \frac{1}{2}\bar{x}^{k}_{a} + \frac{1}{2}\underbar{x}^{k}_{a} \ \forall k \in K, a \in A^k$) which contradicts the optimality of the solution. 
\end{proof}
\begin{proposition}
    Case 5 is strongly NP-Hard
\end{proposition}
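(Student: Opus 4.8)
The plan is to prove strong NP-hardness of Case~5 by a polynomial reduction from the classical \emph{Set Cover} problem, which is well known to be strongly NP-hard (its decision version is already NP-complete, and it carries no numeric data that could hide super-polynomial magnitudes). The crucial preliminary observation is that Claim~\ref{claim_case5} turns Case~5 into a purely combinatorial covering problem. Indeed, by that claim there is an optimal solution in which every commodity is routed, unsplit, to a single compatible sort pair; since $c_s = 1$ and a single trailer on sort pair $s$ already has enough capacity for \emph{all} commodities compatible with $s$ (because $Q = \max_{s}\{\sum_{k} \mathbb{1}_{s \in S^k}\}$), an optimal solution sets $y_s = 1$ precisely for the sort pairs that receive positive volume and $y_s = 0$ otherwise. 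Hence the optimal value of Case~5 equals the minimum size of a set $T \subseteq S$ of sort pairs with $S^k \cap T \neq \emptyset$ for every $k \in K$, i.e.\ the minimum number of sort pairs whose coverage sets $K_s := \{k \in K : s \in S^k\}$ cover $K$. That is exactly minimum Set Cover.

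For the reduction, starting from a Set Cover instance with universe $U = \{u_1, \dots, u_n\}$ and family $\mathcal{C} = \{C_1, \dots, C_m\} \subseteq 2^U$, one builds a Case~5 instance with one commodity per element ($K = U$, $q^k = 1$) and one sort pair per set ($S = \mathcal{C}$), declaring $u_i$ compatible with $C_j$ exactly when $u_i \in C_j$; the single trailer type gets unit cost and capacity $Q = \max_j |C_j|$. To respect the requirement $S^k \subsetneq S$, one also adds a dummy sort pair compatible with no commodity; it is never opened in an optimal solution and does not change $Q$. This construction is clearly polynomial, and the only number appearing, $Q$, is at most $n$, hence polynomially bounded in the input size. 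A cover of size $c$ yields a feasible solution of cost $c$ (open the corresponding sort pairs, assign each commodity to any opened compatible sort pair); conversely, any feasible solution of cost at most $c$ implies the optimal cost is at most $c$, and then Claim~\ref{claim_case5} supplies an unsplit optimal solution whose opened sort pairs form a cover of size equal to that optimal cost, hence at most $c$. Therefore the optimal DLPP cost equals the minimum cover size, and the two decision versions are equivalent.

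Combining these facts shows that Case~5 is at least as hard as Set Cover under a polynomial-time reduction that keeps every number polynomially bounded, which is precisely strong NP-hardness. The only point requiring genuine care — and the step I would flag as the main obstacle — is establishing that the reduction is faithful \emph{despite the $x$-variables being continuous and absent from the objective}: one must be sure that splitting never decreases the cost (this is exactly what Claim~\ref{claim_case5} provides) and that the choice $Q = \max_s |K_s|$ legitimately forces $y_s \in \{0,1\}$, so that the objective really counts opened sort pairs and nothing else. Once these two points are pinned down, the remaining verification of both directions of the equivalence is routine.
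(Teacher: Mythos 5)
Your proof is correct and follows essentially the same route as the paper's: both establish the equivalence between Case~5 and minimum Set Cover by identifying commodities with elements and sort pairs with subsets, invoking Claim~\ref{claim_case5} to rule out splitting, and using the capacity choice $Q=\max_{s}\{\sum_{k}\mathbb{1}_{s\in S^k}\}$ so that the objective simply counts opened sort pairs. If anything, your write-up is more careful than the paper's on the direction of the reduction (from Set Cover to Case~5), on the polynomial boundedness of the numbers required for \emph{strong} NP-hardness, and on the dummy sort pair needed when $S^k$ must be a proper subset of $S$.
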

\begin{proof}
% Note that $S^k \subset S$ implies $d^k_s = \begin{cases} 0, \text{\ if } s \in S^k \\ \infty, \text{ otherwise}\end{cases}$. 
We will show that this special case can be solved as a set cover problem, which is known to be strongly NP-Hard (\cite{karp2010reducibility}). An instance of a set cover is given by a ground set $U = \{x_1, x_2, \cdots, x_n\}$ and a collection of $m$ subsets $E_i \subseteq U \ \forall i \in \{1,2,\cdots,m\}$ of the ground set $U$. The optimization problem is to find the smallest number of subsets $i \in \{1,2,\cdots,m\}$ such that $\bigcup_{i \in \{1,2,\cdots,m\}} E_i = U$.

From claim \ref{claim_case5}, we know that each commodity is assigned to exactly one outbound terminal in the optimal solution. Let commodity $k \in K$ denote element $x_k \in U$, $\vert K \vert = n$ and set of outbound terminals $A = \{1,2,\cdots,m\}$. Define $K_i = \{k \in K: x_k \in E_i\}$ as the set of commodities or elements that can be covered by selecting outbound terminals $i \in \{1,2,\cdots,m\}$. Now note that finding the smallest number of outbound terminals $a \in A$ such that all commodities in $K$ are covered is equivalent to finding the smallest number of subsets $i \in \{1,2,\cdots,m\}$ to cover all elements in $U$. 
\end{proof}
\end{document}